\let\proof\@undefined
\let\endproof\@undefined
\theoremstyle{definition}
\newtheorem{definition}{Definition}
\newtheorem{theorem}{Theorem}
\newtheorem{problem}{Problem}
\newtheorem{lemma}[]{Lemma}
\mathchardef\mhyphen="2D 
\newcommand\pig[1]{\scalerel*[5pt]{\big#1}{%
		\ensurestackMath{\addstackgap[1.5pt]{\big#1}}}}
\definecolor{mypink}{rgb}{0.858, 0.188, 0.478}
\definecolor{mygreen}{rgb}{0.2, 0.7, 0.2}
\definecolor{mycol}{rgb}{0.8, 0.5, 0.5}
\definecolor{mygray}{rgb}{0.5, 0.5, 0.5}
\newcommand{\blu}[1]{{\color{black} #1}}
\newcommand{\gray}[1]{{\color{mygray}#1}}
\def\verticaldistance{5pt}
\def\lessverticaldistance{2pt}
\newcommand{\real}{\mathbb{R}}
\newcolumntype{C}{>{$}c<{$}} 
\definecolor{subsectioncolor}{rgb}{0,0.541,0.855}
\begin{document}
	
	\title{Scalable Operator Allocation for Multi-Robot Assistance: A Restless Bandit Approach}
	\author{Abhinav Dahiya, Nima Akbarzadeh, Aditya Mahajan, Stephen L. Smith
		\thanks{This research is supported in part by the Natural Sciences and Engineering Research Council of Canada (NSERC) and in part by the Innovation for Defence Excellence and Security (IDEaS) Program of the Canadian Department of National Defence through grant CFPMN2-037, and Fonds de Recherche du Quebec-Nature et technologies (FRQNT).} 
		\thanks{Abhinav Dahiya and Stephen L. Smith are with Department of Electrical and Computer Engineering, University of Waterloo, Waterloo
			(\href{mailto:abhinav.dahiya@uwaterloo.ca}{abhinav.dahiya@uwaterloo.ca}, \href{mailto:stephen.smith@uwaterloo.ca}{stephen.smith@uwaterloo.ca})}%
		\thanks{Nima Akbarzadeh and Aditya Mahajan are with Department of Electrical and Computer Engineering, McGill University, Montreal.
			(\href{mailto:nima.akbarzadeh@mail.mcgill.ca}{nima.akbarzadeh@mail.mcgill.ca}, \href{mailto:aditya.mahajan@mcgill.ca}{aditya.mahajan@mcgill.ca})}
	}
	
	\maketitle
	
	\begin{abstract}
		In this paper, we consider the problem of allocating human operators in a system with multiple semi-autonomous robots. \blu{Each robot is required to perform an independent sequence of tasks, subjected to a chance of failing and getting stuck in a fault state at every task.} If and when required, a human operator can assist or teleoperate a robot. Conventional MDP techniques used to solve such problems face scalability issues due to exponential growth of state and action spaces with the number of robots and operators. 
		\blu{In this paper we derive conditions under which the operator allocation problem is indexable, enabling the use of the Whittle index heuristic. The conditions can be easily checked to verify indexability, and we show that they hold for a wide range of problems of interest. Our key insight is to leverage the structure of the value function of individual robots, resulting in conditions that can be verified separately for each state of each robot.}
		\blu{We apply these conditions to two types of transitions commonly seen in remote robot supervision systems. Through numerical simulations, we demonstrate the efficacy of Whittle index policy as a near-optimal and scalable approach that outperforms existing scalable methods.}
	\end{abstract}
	
	\begin{IEEEkeywords}
		Human-robot collaboration, Restless bandits, Markov decision processes, decision support systems
	\end{IEEEkeywords}
	
	\section{Introduction}
	\label{sec:intro}
	Advances in robot autonomy have lead to a decrease in the necessity of strict human supervision of robots.
	This has enabled the development of human-robot collaborative systems where the task is primarily executed by a number of semi-autonomous robots requiring intermittent assistance from a human teammate, either in event of a \textit{fault} \cite{wang2014human, swamy2020scaled} or to further increase performance of the multi-agent teams in warehouse operation \cite{rosenfeld2017intelligent}, search-and-rescue \cite{khasawneh2019human} or in a social setting \cite{zheng2013supervisory}.
	However, identifying which robot to assist in an uncertain environment is a challenging task for human operators \cite{chen2012supervisory, rosenfeld2017intelligent}. Moreover, as the number of robots increases, it becomes challenging for the operators to maintain awareness of every robot, which cripples system's performance \cite{olsen2004fan, chien2013imperfect}.
	Therefore, human operators can benefit from having a decision support system (DSS) that advises which robots require attention and when \cite{chen2014human, rosenfeld2017intelligent}.
	
	In this paper, we present such a DSS for a multi-robot system comprising a fleet of semi-autonomous robots with multiple human operators available for assistance if and when required. Figure~\ref{fig:main_setup} presents an overview of the problem setup, showing $K$ robots navigating in a city block-like environment, moving from start to goal locations. While navigating, a robot passes through a series of waypoints, each characterized by a different probability of success in progressing to the next waypoint. There is also a possibility that the robot may fail at a task and get stuck in a fault (error) state from which assistance from a human operator is required to continue. There are $M$ identical human operators available ($M\leq K$), each of whom can assist/teleoperate at most one robot at a time.
	While being assisted by an operator, robots have different probabilities of success and failure, and to get out of a fault state.
	\begin{figure}
		\centering
		\includegraphics[width=0.8\columnwidth]{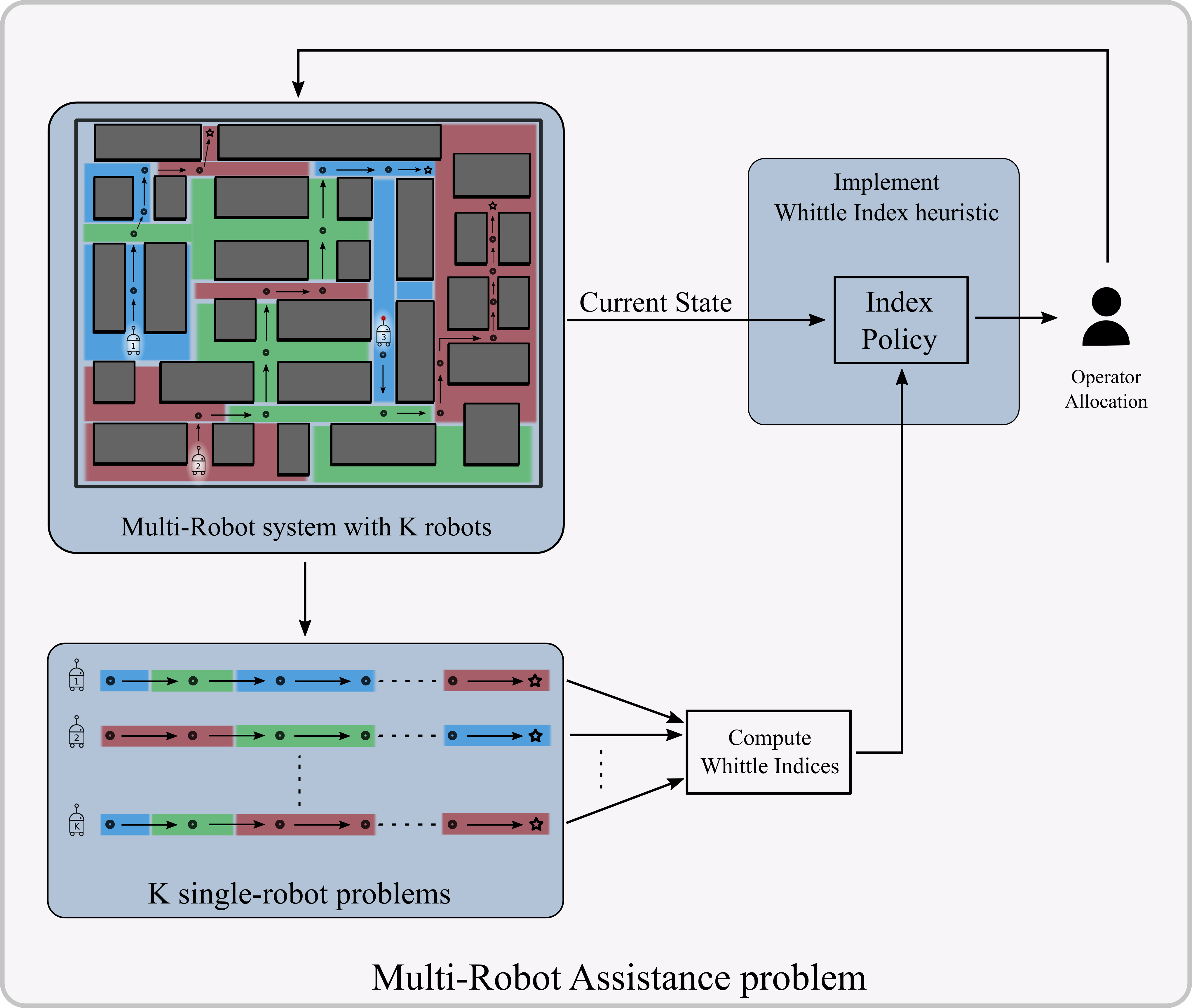}
		\caption{Overview of the multi-robot assistance problem for robots navigating in an environment. A number of mobile robots are tasked to navigate through a series of waypoints. 
			Operators are allocated to the robots when required. This is done by separating the complete $K$-robot problem into $K$ single robot problems and computing the Whittle index heuristic. Given the current state of the system, this heuristic can be used to efficiently compute the operator allocation.}
		\label{fig:main_setup}
	\end{figure}
	%
	It is possible to solve the above problem by modelling it as a Markov Decision Process (MDP) \cite{puterman2014markov}. However, such formulations suffer from the curse of dimensionality, making conventional MDP-based solution techniques scale poorly with problem size~\mbox{\cite{littman2013complexity}}. 
	Moreover, the policy needs to be re-solved every time a robot or an operator enters or leaves the system.
	
	\blu{The most closely related work to our problem is presented in \cite{rosenfeld2017intelligent}, where the authors discuss single-operator multi-robot supervision systems. 
		An advising agent guides the operator on which robot they should assist. 
		The problem is solved using an $l$-step look-ahead (myopic) approach, which provides an efficient and practical solution, but suffers from scalability issues with increasing number of operators and the look-ahead steps.
		Researchers have also discussed deterministic versions of the problem, where exact outcomes of robots' actions and times for fault occurrences are known, and the allocation policy is determined using a sampling-based planner \cite{zanlongo2021scheduling}. In \cite{hari2020approximation}, the authors present an approximation algorithm for a similar scheduling problem. These approaches however do not apply to the stochastic problems.
		
		In this paper, we show how an index-based policy can provide a scalable and better performing solution than the myopic approach for the given multi-robot multi-operator allocation problem under stochastic transitions.}
	\blu{Specifically, our work makes the following contributions:
		
		1) We show that the operator allocation problem with multiple independent robots can be formulated as an instance of the Restless Multi-Armed Bandit (RMAB) problem. 
		We leverage this formulation to obtain a Whittle index-based policy that scales linearly with the number of robots and is independent of the number of operators.
		
		2) We derive simple conditions to verify indexability of the model.
		These conditions can be checked independently for each state of each robot, thus providing a method that scales linearly with the size of the problem. 
		This method can be applied to systems with any number of states and does not require the optimal policy to be of a threshold type.
		
		3) We then implement our approach in two practical scenarios and present numerical experiments. The results show that the proposed method provides near-optimal solutions and outperforms existing efficient solution approaches, namely the reactive policy, $1$- and $2$-step myopic policies, and the {benefit-maximizing} policy. 
	}

	\subsection{Background and Related Work}
	\label{sec:background}
	%
	%
	\blu{The problem of allocating operators in a multi-robot team bares similarities with the disciplines of multi-robot supervision, task scheduling and queuing theory. In this section, we briefly review the related research, followed by an introduction of a Restless-Multi Armed Bandits.
		
		In the literature, several studies discuss the problem of enabling human operators to assist multiple robots such as a team of navigating robots, a fleet of multiple UAVs, or a team performing search and rescue operations~\cite{peters2015human, crandall2010computing}.
		To understand and improve human supervision, researchers have used frameworks such as sliding autonomy to incorporate various human-robot team capabilities (like coordination and situational awareness)~\cite{music2017control, dias2008sliding}. Some studies also present interaction interfaces to facilitate and improve such supervision \cite{szafir2017designing, kirchner2016intelligent}.}
	The problem of assisting a number of independent robots, has also been studied under a learning framework. 
	\blu{The approach presented in \cite{swamy2020scaled} learns the decision-making model of human operator from recorded data 
		and tries to replicate that behaviour, optimizing based on the operator's internal utility function.} 
	In contrast, the problem presented in this paper is designed to optimize a global performance metric assuming the knowledge of success and failure rates of robots with and without an operator allocated to them. 
	\blu{Such knowledge can be estimated using recorded data similar to the work presented in \cite{rosenfeld2017intelligent}. For the scope of this paper, we will assume this knowledge takes the form of known transition probabilities.}
	
	%
	
	\blu{In the queuing discipline, several studies have investigated the effects of different queuing techniques \cite{chien2018attention} or threshold-based strategies \cite{raeissi2017balking} to prioritize operator's attention to the robots. 
		However, the model that we study is different from a queuing model as it is possible for the robots to complete their tasks without the help of operators, and for the operators to be allocated to robots not stuck in a fault state.}


	
	\blu{The multi-target--multi-agent problems form another class of problems similar to the operator allocation problem. These problems deal with allocation of multiple agents to a number of targets aiming to detect or follow the targets under certain constraints \cite{berger2014new, pei2016multi}. However, our problem setup is different because the behaviour of the targets (robots) changes with the allocation of agents (operators) and it is not possible to allocate multiple agents to a single target at once. Moreover, our problem presents a collaborative task, where both the robots and operators are working to achieve a common goal.}
	
	\subsubsection*{Restless Multi-Armed Bandit}
	Restless Multi-Armed Bandits (RMAB), first introduced in \cite{whittleIndex}, is a generalization of Multi-Armed Bandits (MAB) \cite{gittins1979bandit} which has been previously used in problems like assisting human partners \cite{chan2019assistive} and distributing resources among human teammates \cite{claure2020multi}. RMAB is a class of scheduling problems where limited resources have to be allocated among several alternative choices. Each choice, referred to as an \textit{arm}, is a discrete-time controlled Markov process which evolves depending on the resource allocated to it. 
	\blu{RMAB framework has been applied to problems in stochastic scheduling, patrol planning, sensor management and resource allocation in general~\mbox{\cite{mahajan2008multi}}.} 
	
	
	Finding the optimal policy for RMAB suffers from the curse of dimensionality as the state space grows exponentially with the number of arms. In general, obtaining the optimal policy in an RMAB is {PSPACE-hard} \cite{tsitsiklis1999complexity}. 
	However, the \textit{Whittle index policy} offers a simpler and scalable alternative to the optimal policy.
	%
	\blu{Even though the Whittle index policy does not guarantee an optimal solution, it minimizes expected cost for a relaxed problem under time-averaged constraint \cite{whittleIndex}.} 
	\blu{This approach is shown to work quite well for several scheduling and resource allocation problems \cite{xiong2021learning, borkar2017opportunistic, akbarzadeh2021conditions}. A few studies have also implemented index-based methods to solve a sensor scheduling problem \cite{wu2020optimal} or to serve a number of users transmitting a queue of data packets through a channel \cite{borkar2017opportunistic}.}
	Therefore, it is a reasonable approach to solve an RMAB given that the problem satisfies a technical condition known as \textit{Indexability} (more details in Section~\ref{sec:restless}).
	\blu{Unfortunately, it is difficult to verify this condition in general and there is no universal framework that applies to all problems.
		Existing methods proposed for verifying indexability have been investigated for specific systems such as two state restless bandits \cite{avrachenkov2013congestion, liu2010indexability} or restless bandits with optimal threshold-based policy \cite{avrachenkov2013congestion, akbarzadeh2019dynamic, akbarzadeh2020restless}.}

	\subsection{Organization}
	\color{black}
	The contents of this paper are organized as follows: The multi-robot assistance problem is presented in Section~\ref{sec:general_formulation}. We discuss the general Restless Bandit Problem and define property of indexability in Section \ref{sec:restless}. In Section \ref{sec:example_problems}, we present two practical classes of transition functions and establish conditions under which problem indexability is ensured. In Section~\ref{sec:index_policy}, we cover the calculation of Whittle index heuristic and present an efficient policy for the problem. Next, we present simulations of the problem in Section~\ref{sec:simulation} to examine validity and performance of the presented policy. The paper ends with a brief discussion and conclusion.
	
	\section{Multi-Robot Assistance Problem}
	\label{sec:general_formulation}
	Consider a decision support system (DSS), consisting of a team of $M$ human operators supervising a fleet of $K$ semi-autonomous robots. Each robot $k\in {\cal K} \coloneqq \{1,\ldots, K\}$ is required to complete a sequence of $N^k$ tasks to reach its goal. 
	We will use a fleet of robots delivering packages in a city as a running example but similar interpretations hold for other applications mentioned in previous sections \blu{(e.g., robots reaching a sequence of configurations \cite{zanlongo2021scheduling})}. In this case, the robot's trajectory would correspond to a series of waypoints that a robot needs to navigate to reach its destination (goal location).
	At each waypoint, a robot can either operate autonomously or be teleoperated by one of the human operators. We assume that all human operators are identical in the way they operate the robots and that a human operator can operate at most one robot at a time.
	We now provide a mathematical model for different components of the system\footnote{\blu{\textbf{Remark on notation:} Throughout this paper, we use calligraphic font to denote sets and roman font to denote variables. Uppercase letters are used to represent random variables and the corresponding lowercase letters represent their realizations. Bold letters are used for variables pertaining to multi-robot system while light letters represent corresponding single-robot variables.}}.
	\subsection{Model of the robots}
	It is assumed that when operating autonomously, each robot uses a pre-specified control algorithm to complete its task. For the delivery robot example, this could be, for instance, a SLAM-based local path planner that the robot uses for navigating between the waypoints. We will not model the details of this control algorithm but simply assume that this control is imperfect and occasionally causes the robot to enter a fault state while doing a task (e.g., delivery robot getting stuck in a pothole or losing its localization). 
	We model this behaviour by assuming that while completing each task, the robot may be in one of the two internal states: a \emph{normal} state (denoted by $s=0$) or a \emph{fault} state (denoted by $s=1$).  When a robot is being teleoperated, it may still be possible for it to enter into a fault state. 
	
	The operating state of robot $k \in \mathcal{K}$ at time~$t$, denoted by $x^k_t = (n^k_t, s^k_t)$, is tuple of its current task and internal state. The state space for robot $k$ is given by 
	\[
	\mathcal{X}^k \coloneqq \bigcup_{n=1}^{N^k} \{ (n,0), (n,1) \} \cup \{ (G,0) \},
	\]
	where the terminal state $(G,0)$ indicates that all tasks have been completed. The state space for all robots is denoted by $\pmb{\mathcal{X}} = \mathcal X^1 \times \cdots \times \mathcal X^K$.
	
	The state of a robot evolves differently depending on whether it is operating autonomously (denoted by mode $a^k = 0$) or teleoperated (denoted by $a^k=1$). Given robot $k \in \mathcal{K}$ in state $(n,s) \in \mathcal{X}^k$ operating in mode $a \in \{0,1\}$, let $p^{ka}_{ns}$ denote the probability of successfully completing the current task at the current time step and let $q^{ka}_{ns}$ denote the probability of toggling the current internal state (i.e. going from normal to fault state and vice-versa). A diagram describing these transitions is shown in Fig.~\ref{fig:state_transition}. Note that the terminal state $(G,0)$ is an absorbing state, so $p^{ka}_{G0} = 0$ and $q^{ka}_{G0} = 0$. 
	
	\begin{figure}[ht]
		\centering
		\includegraphics[width=0.75\columnwidth]{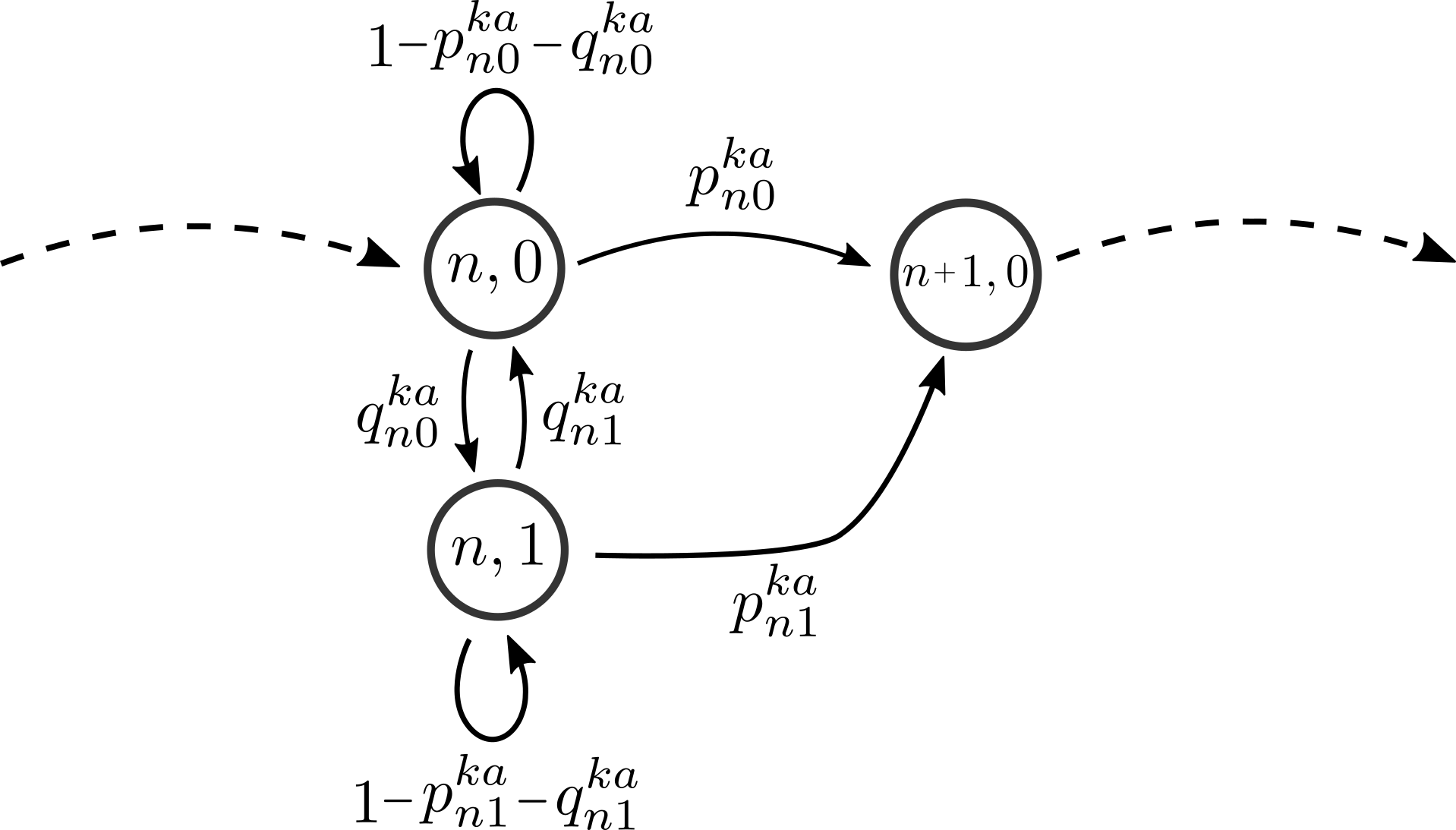}
		\caption{State-transition diagram for robot $k$ working on task $n$, where in $(n, 0)$ the robot is in the normal state $s=0$, and in $(n,1)$ the robot is in the fault state $s=1$. Transitions can occur between $(n,0), (n,1)$ and $(n+1,0)$, and the probabilities change with operating mode $a$.}
		\label{fig:state_transition}
	\end{figure}
	
	There is a per-step cost $C^k \colon \mathcal{X}^k \times \{0, 1\} \to \real_{\ge 0}$, where $C^k( (n^k, s^k), a^k)$ denotes the cost of operating robot $k \in \mathcal{K}$ in mode $a^k$ when the robot is in state $(n^k, s^k)$. Note that the per-step cost is zero in the terminal state, i.e, $C^k( (G,0), a) = 0$. 
	
	\subsection{Model of the decision support system (DSS)}
	There is a decision support system that helps to allocate operators to the robots. At each time the decision support system observes the operating state $\boldsymbol X_t \coloneqq (X^1_k, \dots, X^K_t)$ of all robots and picks at most $M$ robots to teleoperate. We capture this by the allocation $\boldsymbol A_t=(A^1_t,\ldots,A^K_t)\in \mathcal{A}$, where 
	\begin{equation}
		\mathcal{A} = \biggl\{ \boldsymbol{a}\coloneqq(a^1, \ldots, a^K) \in \{0, 1\}^{K} : \sum_{k=1}^{K} a^k \leq M \biggr\}. \label{eqn:action_space}
	\end{equation}
	The allocation is selected according to a time-homogeneous Markov policy $\pi : \pmb{\mathcal{X}} \to \mathcal{A}$. The expected total cost incurred by any policy $\pi$ is given by
	\begin{equation}
		J(\pi) = \mathbb{E}^\pi \biggl[\sum_{t=0}^{\infty} \gamma^t \sum_{k=1}^K C^k(X^k_{t}, A^k_{t}) \biggm| \boldsymbol X_0 = \boldsymbol x_0 \biggr],
		\label{eq:policy-cost}
	\end{equation}
	where $\gamma \in (0,1)$ is the discount factor and $\boldsymbol x_0=(x^1_0,\ldots,x^K_0)$ is the initial state with $x^k_0=(1,0)$ for every $k\in{\cal K}$.
	\subsection{Problem objective}
	We impose the following assumptions on the model:
	\begin{enumerate}
		\item [\textbf{(A1)}] Given an allocation $\boldsymbol{a} = (a^1, \dots, a^K)$ by the DSS, the operating states of the robots evolve independently of each other.
		\item [\textbf{(A2)}] For every robot $k \in \mathcal{K}$, the probability of getting out the faulty internal state when teleoperated is strictly greater than~$0$, i.e., $p^{k1}_{n1} + q^{k1}_{n1} > 0$.
		\item [\textbf{(A3)}] Under autonomous operation, a robot stays in the fault state, i.e., $p^{k0}_{n1} = q^{k0}_{n1} = 0$. 
	\end{enumerate}
	
	The design objective is to solve the following optimization problem:
	\begin{problem}\label{prob:main}
		Given the set $\mathcal{K}$ of robots, the system dynamics and the per-step costs, the number $M$ of human operators, and the discount factor $\gamma \in (0,1)$, choose a policy $\pi : \pmb{\mathcal{X}} \to \mathcal{A}$ to minimize the total discounted cost $J(\pi)$ given by~\eqref{eq:policy-cost}.
	\end{problem}
	Optimal solution for Problem~\ref{prob:main} can be found by modelling it as a Markov decision process and solving using dynamic programming~\cite{puterman2014markov}. However, the sizes of state and action spaces of the resulting model grows exponential with the number of robots and operators. Thus, solving Problem~\ref{prob:main} using dynamic programming becomes intractable for larger systems. To address this, we model Problem~\ref{prob:main} as a restless multi-armed bandit (RMAB) problem  and use the notion of indexability to find an efficient and scalable policy. We start by providing an overview of RMAB in the next section.
	
	\section{Overview of Restless Multi-armed bandits} \label{sec:restless}
	In this section we provide an overview of Restless Multi-Armed Bandits (RMAB), 
	indexability and the Whittle index policy.
	\subsection{Restless Bandit Process} \label{subsec:RB}
	A restless bandit (RB) process is a controlled Markov process~$(\mathcal{\tilde Z}, \{0, 1\}, \tilde T, \tilde C, \tilde z_0)$ where $ \mathcal{\tilde Z}$ is the state space, $\{0, 1\}$ is the action space, $\tilde T \colon \mathcal{\tilde Z} \times \mathcal{\tilde Z} \times \{0, 1\} \rightarrow \real_{[0,1]}$ is the transition probability function, $\tilde C: \mathcal{\tilde Z} \times \{0, 1\} \to \mathbb{R}$ is the per-step cost function, and $\tilde z_0$ is the initial state. 
	By convention, action~$0$ is called the \textit{passive} action and action~$1$ is called the \textit{active} action. 
	%
	\subsection{Restless Multi-armed Bandit Problem} \label{subsec:mRB}
	A Restless Multi-armed Bandit (RMAB) is a collection of $K$ independently evolving RBs~$(\mathcal{\tilde Z}^k, \{0, 1\}, \allowbreak \tilde T^k, \tilde C^k, \tilde z^k_0)$, $k \in {\cal K} \coloneqq \{1,\ldots,K\}$. Each process is conventionally called an \emph{arm}. A decision-maker selects at most $M$ arms ($M < K$) at each time instance. Let $\tilde Z^k_t$ and $\tilde A^k_t$ denote the state of arm~$k$ and the action chosen for arm~$k$ at time $t$. Let $\{\boldsymbol{\tilde{Z}}_t\}_{t \ge 0}$ and $\{\boldsymbol{\tilde{A}}_t\}_{t \ge 0}$ where 
	\begin{align*}
		\boldsymbol{\tilde{Z}}_t \coloneqq (\tilde Z^1_t,\ldots,\tilde Z^K_t) \;\text{ and }\;
		\boldsymbol{\tilde{A}}_t \coloneqq (\tilde A^1_t,\ldots,\tilde{A}^K_t),
	\end{align*}
	denote the states and actions of all arms.
	As the dynamics of each arm are independent, we have
	\[
	\tilde{T}( \boldsymbol{\tilde{Z}}_{t+1}| \boldsymbol{\tilde{Z}}_t,\boldsymbol{\tilde{A}}_t)
	=
	\prod_{k \in {\cal K}} \tilde T^k(\tilde{Z}^k_{t+1}| \tilde{Z}^k_t,\tilde{A}^k_t).
	\]
	The instantaneous cost of the system is the sum of costs incurred by each arm. 
	The performance of any time homogeneous Markov policy $\boldsymbol{\tilde{\pi}}:\prod_{k = 1}^{K}{\cal Z}^k\to\{\boldsymbol{a} \in\{0,1\}^K:~||\boldsymbol{a}||_1\leq M\}$ is measured by 
	\begin{equation}
		\tilde J(\boldsymbol{\tilde{\pi}}) = \mathbb{E}\biggl[ \sum_{t = 0}^{\infty} \gamma^t \sum_{k=1}^K \tilde C^k(\tilde Z^k_t, \boldsymbol{\tilde{\pi}}(\tilde Z^k_t)) \bigg| \tilde z^1_0, \ldots, \tilde z^K_0 \biggr], \label{eqn:obj_func-RB}
	\end{equation}
	where $\gamma \in (0,1)$ denotes the discount factor. Finally, the RMAB optimization problem is as follows:
	\begin{problem} \label{prob:infinite-RB}
		Given a discount factor $\gamma \in (0,1)$, a collection of arms~$\{(\mathcal{\tilde Z}^k, \{0, 1\}, \tilde T^k, \tilde C^k, \tilde z^k_0)\}_{k \in {\cal K}}$, and the number~$M$ of arms to be chosen at each time, choose a policy $\tilde {\boldsymbol \pi}:~\prod_{k = 1}^{K}{\cal Z}^k \to\{\boldsymbol{a} \in\{0,1\}^K:~||\boldsymbol{a}||_1\leq M\}$ that minimizes $\tilde{J}(\boldsymbol{\tilde{\pi}})$.
	\end{problem}
	As discussed earlier, to tackle the scalability issues of dynamic programming-based solutions, the Whittle index policy is one of the commonly used heuristic to solve a RMAB problem~\cite{whittleIndex}. 
	This policy is computationally efficient and it readily generalizes to the setting where $K$ or $M$ changes over time. Next, we present the required definitions.
	
	\subsection{Indexability and the Whittle index policy}
	In this section, we restrict our discussion to a single arm and therefore omit the superscript $k$ for the ease of notation. Consider an arm~$(\mathcal{\tilde Z}, \{0, 1\}, \tilde T, \tilde C_\lambda, {\tilde z}_0)$ where, for some penalty $\lambda\in\real$, modify the per-step cost as 
	\begin{equation} \label{eqn:modif_cost}
		\tilde C_\lambda(z, a) \coloneqq \tilde C(z, a) + \lambda a, \;
		~\forall\; z \in \mathcal{\tilde Z}, a \in \{0, 1\}.
	\end{equation}
	Then the performance of any given time-homogeneous Markov policy~$\tilde \pi: \mathcal{\tilde Z} \to \{0, 1\}$ is given by
	\begin{equation}
		\tilde{J}_{\lambda}(\tilde \pi) := \mathbb{E}\biggl[ \sum_{t = 0}^{\infty} \gamma^t \tilde C_\lambda(\tilde{Z}_t, \tilde \pi(\tilde{Z}_t)) \bigg| \tilde{Z}_0  \sim \tilde z_0 \biggr]. \label{eqn:obj_func-modif}
	\end{equation}
	Now consider the following auxiliary problem:
	\begin{problem}\label{prob:info-RB}
		Given an arm~$(\mathcal{\tilde Z}, \{0, 1\}, \tilde T, \tilde C, \tilde z_0)$, the discount factor~$\gamma \in (0,1)$ and the penalty $\lambda \in \mathbb{R}$, choose a Markov policy $\tilde \pi: \mathcal{\tilde Z} \to \{0, 1\}$ to minimize $\tilde{J}^{(\tilde \pi)}_{\lambda}(\tilde z_0)$ given by \eqref{eqn:obj_func-modif}.
	\end{problem}
	
	Problem~\ref{prob:info-RB} is a Markov decision process. Let us denote the optimal policy of Problem~\ref{prob:info-RB} by $\tilde \pi^~_\lambda$. It is assumed that the optimal policy picks passive action at any state where both the active and passive actions result in same expected cost. Next, define passive sets and indexability. 
	%
	\begin{definition}[Passive set]
		Given $\lambda\in\real$, the passive set~$\tilde {\cal P}(\lambda)$ is the set of states where passive action is prescribed by $\tilde \pi^~_\lambda$, i.e.,
		\begin{equation*}
			\tilde {\cal P}(\lambda) := \left\{ z \in {\cal Z}: \tilde \pi^~_\lambda(z) = 0 \right\}.
		\end{equation*}
		\label{def:passive_set}
	\end{definition}
	\begin{definition}[Indexability]
		An arm is indexable if $\tilde {\cal P}(\lambda)$ is non-decreasing in $\lambda$, i.e., for any $\lambda_1, \lambda_2 \in \mathbb{R}$,
		\begin{equation*}
			\lambda_1 \leq \lambda_2 \implies \tilde {\cal P}(\lambda_1) \subseteq \tilde {\cal P}(\lambda_2).
		\end{equation*}
		A RMAB problem is indexable if all $n$ arms are indexable.
		\label{def:indexability}
	\end{definition}
	\begin{definition}[Whittle index] 
		For an indexable arm, the Whittle index of the state~$z$ of an arm is the smallest value of $\lambda$ for which state~$z$ is part of $\tilde {\cal P}(\lambda)$, i.e.,
		\begin{equation}
			\tilde w(z) = \inf \left\{ \lambda \in \mathbb{R}: z \in \tilde {\cal P}(\lambda)\right\}.
			\label{eq:whittle_index_def}
		\end{equation}
		Equivalently, the Whittle index $\tilde w(z)$ is the smallest value of $\lambda$ for which $\tilde \pi^~_\lambda$ is indifferent between the active action and passive action when the arm is in state~$z$. 
		\label{def:whittle_index}
	\end{definition}
	%
	The Whittle index policy is as follows: \textit{At each time, compute the Whittle indices of the current state of all arms and select the arms in states with $M$ highest Whittle indices (provided they are positive).}

	\section{Indexability of the assistance problem}
	%
	
	Problem~\ref{prob:main} can be formulated as an instance of RMAB, where each robot corresponds to an arm. Under such a formulation, the state $\tilde{Z}^k_t$ of arm $k$ corresponds to operating state $x^k_t=(n^k_t,s^k_t)$ of robot $k$. The transition function $\tilde{T}^k$ corresponds to the robot state evolution shown in Fig.~\ref{fig:state_transition} and the cost function $\tilde{C}^k$ corresponds to the associated per-step cost $C^k$. In addition, allocating an operator to robot corresponds to choosing the active action for that arm while autonomous operation corresponds to choosing the passive action. This motivates using the Whittle index policy to solve Problem~\ref{prob:main}. 
	However, before we can implement this approach, we must check for indexability of the problem. \blu{As discussed earlier, there is no universal framework to verify indexability of a problem. Moreover, the optimal policy for the given problem does not show any threshold-based behaviour. Therefore, we determine sufficient conditions for indexability from first principles by using properties of the value function of each individual arm.}
	
	Since indexability has to be checked for each arm separately, for this analysis, we drop the superscript~$k$ from all variables.
	
	Let $V_\lambda : {\cal X} \to \mathbb{R}$ be the unique fixed point of the following equation
	\begin{equation*}
		V_\lambda(x) = \min_{a \in \{0, 1\}}  Q_\lambda(x, a),
	\end{equation*} 
	where 
	\begin{align}
		Q_\lambda(x, a) & = C(x, a) + \lambda a + \gamma \sum_{x' \in {\cal X}} {T}(x'|x,a)  V_\lambda(x'),
		\label{eq:Q_value_expanded}
	\end{align}
	represents the \emph{Q}-value of taking action $a$ in state $x$. Here the transition function $T(x'|x,a)$ denotes the probability of transition from state $x$ to state $x'$ under action $a$ and is represented by  Fig.~\ref{fig:state_transition}.
	Let $\pi_\lambda: {\cal X} \to \{0, 1\}$ be the corresponding optimal policy
	\begin{equation*}
		\pi_\lambda(x) = \arg \min_{a \in \{0, 1\}}  Q_\lambda(x, a).
	\end{equation*} 
	To ensure uniqueness of the $\arg\min$, we follow the convention that when $Q_\lambda(x, 0) = Q_\lambda(x, 1)$, the passive action $a=0$ is chosen.
	Let ${\cal P}(\lambda)$ be the passive set given penalty~$\lambda$ and $w(x)$ be the Whittle index of state~$x$ for the problem of operator allocation in a single-robot system.
	Furthermore, define the \textit{Benefit function} as
	\begin{equation}
		B_\lambda(x) = Q_\lambda(x, 1) - Q_\lambda(x, 0).
		\label{eq:B_lambda}
	\end{equation}
	Then, a sufficient condition for indexability is as follows:
	\begin{lemma} \label{lemma:Bindexable}
		A sufficient condition for Problem~\ref{prob:main} to be indexable is that the benefit function $B_\lambda(x)$ for each robot is monotonically increasing in $\lambda$ for all states $x\in \mathcal{X}$.
		\label{th:non-dec_B}
	\end{lemma}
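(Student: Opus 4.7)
The plan is to translate the definition of the passive set into a condition on the sign of the benefit function $B_\lambda(x)$, and then use the assumed monotonicity to verify the set-inclusion condition of Definition~\ref{def:indexability} directly. Since indexability of the full RMAB is defined arm-by-arm, it suffices to work with a single arm (robot), which is why we have already dropped the superscript $k$.

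The first step is to show that for any $\lambda \in \mathbb{R}$,
\begin{equation*}
\mathcal{P}(\lambda) \;=\; \{\, x \in \mathcal{X} : B_\lambda(x) \ge 0 \,\}.
\end{equation*}
This is almost immediate from the definition of $\pi_\lambda$ and our tie-breaking convention: $\pi_\lambda(x) = 0$ iff $Q_\lambda(x,0) \le Q_\lambda(x,1)$, and this inequality is exactly $B_\lambda(x) \ge 0$ by~\eqref{eq:B_lambda}. The tie-breaking rule (ties go to the passive action) is what lets us use a non-strict inequality here, which in turn is what makes the rest of the argument clean.

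The second step is the monotonicity argument. Fix $\lambda_1 \le \lambda_2$ and let $x \in \mathcal{P}(\lambda_1)$. By the characterization above, $B_{\lambda_1}(x) \ge 0$. Since $B_\lambda(x)$ is assumed to be monotonically increasing in $\lambda$ for every state $x$, we have $B_{\lambda_2}(x) \ge B_{\lambda_1}(x) \ge 0$, so $x \in \mathcal{P}(\lambda_2)$. Hence $\mathcal{P}(\lambda_1) \subseteq \mathcal{P}(\lambda_2)$, which is precisely the indexability condition for the arm. As this holds for every robot, Problem~\ref{prob:main} is indexable.

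There is really no serious obstacle here; the content of the lemma is just a clean reformulation of the definition of indexability in terms of the benefit function. The one point requiring minor care is the tie-breaking convention: without it, the passive set would be $\{x : B_\lambda(x) > 0\}$ (or $\{x : B_\lambda(x) \ge 0\}$, depending) and monotonicity of $B_\lambda$ would still suffice, but stating the characterization as a closed inequality keeps the argument free of edge cases and matches the convention adopted just before Definition~\ref{def:passive_set}. The substantive work of the paper is deferred to later sections, where one must actually establish that $B_\lambda(x)$ is non-decreasing in $\lambda$ for the concrete transition models of interest.
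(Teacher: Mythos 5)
Your proposal is correct and follows essentially the same route as the paper: rewrite the passive set as $\mathcal{P}(\lambda) = \{x \in \mathcal{X} : B_\lambda(x) \ge 0\}$ via the tie-breaking convention, then observe that monotonicity of $B_\lambda(x)$ in $\lambda$ immediately yields the nesting $\mathcal{P}(\lambda_1) \subseteq \mathcal{P}(\lambda_2)$ required by Definition~\ref{def:indexability}. Your write-up simply makes explicit the final set-inclusion step that the paper leaves implicit.
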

	
	\begin{proof}
		The result follows from the observation that using~\eqref{eq:B_lambda} and Def.~\ref{def:passive_set}, we can re-write the passive set as
		\begin{equation}
			\mathcal{P}(\lambda) = \left\{ x \in \mathcal{X}: B_\lambda(x) \geq 0 \right\}.
			\label{eq:redef_passive}
		\end{equation}
		
		%
		Thus, monotonicity of the benefit function $B_{\lambda}(x)$ implies that the condition for indexability given in Def.~\ref{def:indexability} is satisfied.
		
	\end{proof}
	
	%
	\blu{We verify the monotonicity of $B_\lambda(x)$ by bounding the value function and establish the following:}
	%
	%
	%
	
	
	%
	\begin{theorem}\label{th:single_robot_indexability}
		Let $r^a_{ns} \triangleq 1-p^a_{ns}-q^a_{ns}$ denote the probability of repeating a task $n$ under mode $a$ with internal state $s$.
		Define $\alpha_1(n)$ and $\beta_0(n)$ as follows:
		%
		\begin{multline*}
			\alpha_1(n) = 1 + \frac{\gamma q^1_{n0}}{1 - \gamma \,r^1_{n1}} \\
			+ \frac{\gamma q^0_{n0} \left(\gamma \,r^1_{n0} + \cfrac{\gamma^2 q^1_{n0}q^1_{n1}}{1 - \gamma \,r^1_{n1}} -1\right)}
			{1 - \gamma \,r^1_{n1} -\gamma \,r^0_{n0} +\gamma^2 r^1_{n1} r^0_{n0} -\gamma^2 q^0_{n0} q^1_{n1}},
		\end{multline*}
		and 
		\begin{equation*}
			\beta_0(n) = \frac{\gamma (p^1_{n0} - p^0_{n0}) + \gamma^2 {\left(p^0_{n0} r^1_{n0} - p^1_{n0} r^0_{n0} \right)}}{1 - \gamma \,r_{n0}^0}.
		\end{equation*}
		Then, the single-robot problem is indexable if for all $n \in \{1,2,\ldots,N\}$:
		\begin{align}
			\alpha_1(n) &\geq 0 \quad \text{and} \quad \frac{\beta_0(n)}{1-\gamma} \geq -1.
			\label{eq:cond_a1_geq0}
		\end{align}
	\end{theorem}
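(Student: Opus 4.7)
The plan is to invoke Lemma~\ref{lemma:Bindexable} and reduce indexability to showing that, at every state $x=(n,s)$, the benefit function $B_\lambda(x)$ is monotonically non-decreasing in~$\lambda$. Because the transition kernel is forward in the task index (only $(n,0)$, $(n,1)$ and $(n{+}1,0)$ are reachable in one step from task~$n$), I would establish the monotonicity by a backward induction on~$n$, starting from the absorbing terminal state $(G,0)$ where the claim is trivial.

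For the inductive step I differentiate the Bellman recursion for $Q_\lambda$ in~\eqref{eq:Q_value_expanded} to obtain
\[
\partial_\lambda B_\lambda(n,s)\;=\;1+\gamma\!\sum_{x'}\bigl[T(x'|(n,s),1)-T(x'|(n,s),0)\bigr]\,D_\lambda(x'),
\]
where $D_\lambda(x):=\partial_\lambda V_\lambda(x)$. By the envelope theorem, $D_\lambda(x)$ coincides with the expected discounted number of active actions issued by the optimal policy $\pi_\lambda$ starting from~$x$, so $D_\lambda(x)\in[0,1/(1-\gamma)]$ and it satisfies the Bellman equation $D_\lambda(x)=\pi_\lambda(x)+\gamma\sum_{x'}T(x'|x,\pi_\lambda(x))\,D_\lambda(x')$. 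By Assumption~(A3) the state $(n,1)$ is absorbed under the passive action, so the pair $(D_\lambda(n,0),D_\lambda(n,1))$ satisfies a two-dimensional linear system $(I-\gamma M)\mathbf{D}=\mathbf{b}$ once $D_\lambda(n{+}1,0)$ is known, where $M$ is the $2{\times}2$ self-transition matrix of task~$n$ induced by $\pi_\lambda$ and $\mathbf{b}$ contains the action profile plus a $\gamma D_\lambda(n{+}1,0)$ coupling term. Inverting $I-\gamma M$ for the action profile making the constraint tight produces exactly the denominators $1-\gamma r^1_{n1}$ and $(1-\gamma r^0_{n0})(1-\gamma r^1_{n1})-\gamma^2 q^0_{n0}q^1_{n1}$ visible inside $\alpha_1(n)$.

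The central step is then to minimise $\partial_\lambda B_\lambda(n,s)$ over the four admissible action profiles $(\pi_\lambda(n,0),\pi_\lambda(n,1))$ and over $D_\lambda(n{+}1,0)\in[0,1/(1-\gamma)]$. For $s=1$ this worst case, after substituting the closed-form $D$--values, collapses algebraically to the expression $\alpha_1(n)$, so the condition $\alpha_1(n)\geq 0$ is precisely what forces $\partial_\lambda B_\lambda(n,1)\geq 0$. For $s=0$ the worst case arises when $D_\lambda(n{+}1,0)$ is replaced by its upper bound $1/(1-\gamma)$; this is what introduces the extra factor $(1-\gamma)^{-1}$ present in the condition $\beta_0(n)/(1-\gamma)\geq -1$, and the numerator $\gamma(p^1_{n0}-p^0_{n0})+\gamma^2(p^0_{n0}r^1_{n0}-p^1_{n0}r^0_{n0})$ emerges after clearing the denominator $1-\gamma r^0_{n0}$ coming from solving for $D_\lambda(n,0)$ along the passive branch.

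The main obstacle will be the worst-case case analysis: one has to verify that the action profiles yielding $\alpha_1(n)$ and $\beta_0(n)$ really dominate the other three profiles by algebraic comparison, and that the matrix $I-\gamma M$ remains strictly positive-definite (so its inverse has the right sign structure), which follows from $\gamma<1$ together with $r^a_{ns},q^a_{ns}\in[0,1]$. A secondary technical point is that $\lambda\mapsto V_\lambda$ is only piecewise linear, so $\partial_\lambda V_\lambda$ exists only almost everywhere; this can be bypassed either by working with the finite-difference $B_{\lambda_2}(x)-B_{\lambda_1}(x)$ for $\lambda_1<\lambda_2$ and using continuity of $B_\lambda$ in~$\lambda$, or by taking one-sided derivatives at the kink points.
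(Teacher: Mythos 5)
Your overall architecture coincides with the paper's: invoke Lemma~\ref{lemma:Bindexable}, differentiate the Bellman recursion in $\lambda$ away from the finitely many kinks, bound $\partial_\lambda V_\lambda \in [0,1/(1-\gamma)]$ (the paper's Lemma~\ref{th:partialV_bound}, which you obtain via the same envelope/occupation-measure argument), solve the within-task $2\times 2$ linear system induced by the optimal action profile, and then check non-negativity of an affine function of $D_\lambda(n{+}1,0)$ at both endpoints. The backward induction on $n$ is superfluous — nothing about task $n{+}1$ is needed beyond the universal bound on $D_\lambda$ — but it is harmless.

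There is, however, a concrete error in your case analysis that would derail the execution: you attribute the condition $\alpha_1(n)\ge 0$ to the fault state $s=1$ and the condition $\beta_0(n)/(1-\gamma)\ge -1$ to the normal state $s=0$. In fact the fault state requires \emph{no} condition at all: by (A3) the fault state is absorbing under the passive action, so $Q_\lambda((n,1),0)$ is either $C(e,0)/(1-\gamma)$ or $C(e,0)+\gamma V_\lambda(e)$, and in either case $\partial_\lambda B_\lambda(n,1)\ge 0$ follows directly from the nonnegativity of $\partial_\lambda V_\lambda$ (the paper's Lemma~\ref{th:conditions_at_error_state}). Both of the theorem's conditions instead come from the \emph{normal} state $(n,0)$: with $\pi_\lambda(n,0)=i$ and $\pi_\lambda(n,1)=j$ one gets $\partial_\lambda B_\lambda(n,0)=b_{ij}(n)\bigl[\alpha_j(n)+\beta_j(n)\,\partial_\lambda V_\lambda(n{+}1,0)\bigr]$ with $b_{ij}(n)>0$, so four endpoint conditions arise ($\alpha_j\ge 0$ and $\alpha_j+\beta_j/(1-\gamma)\ge 0$ for $j\in\{0,1\}$), of which two are automatic ($\alpha_0=1$ and $\alpha_1+\beta_1/(1-\gamma)=1/b_{11}>0$), leaving exactly $\alpha_1(n)\ge 0$ (the $j=1$ profile at $D=0$) and $1+\beta_0(n)/(1-\gamma)\ge 0$ (the $j=0$ profile at $D=1/(1-\gamma)$). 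Your plan to show that two profiles ``dominate'' the others is not what happens — no profile dominates; rather each profile's condition must be checked and two turn out to hold unconditionally. As written, your $s=0$ analysis would produce only the $\beta_0$ condition and miss $\alpha_1(n)\ge 0$ for the normal state, while your $s=1$ analysis would be chasing a condition that does not exist.
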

	\begin{proof}
		See Appendix.
	\end{proof}
	The multi-robot problem is indexable if the conditions given in Theorem~\ref{th:single_robot_indexability} hold true for all robots.
	In the next section, we present specific instances of the general model described in Section~\ref{sec:general_formulation} which are indexable and discuss their relevance in practical assistance problems for (semi)autonomous delivery robots.
	%
	\section{Special Cases: Robot transitions in the city}
	\label{sec:example_problems}
	This section presents two specific classes of transition functions which represent two types of faults commonly occurring in systems with remote navigating robots. 
	\subsection{Transition Type-1 : Faults with continuation}
	\label{sec:problem-2}
	Consider the following transition behaviour along a robot's waypoints. At each time step, the robot moves to its next waypoint with a probability representing, for example, the crowd in the area. There is also a probability of getting into a fault state such as encountering an unidentifiable obstacle. 
	A human operator can teleoperate the robot to its next waypoint both from a normal or fault state. \blu{Such transitions represent faults where the robot is functioning properly but is unsure about how to proceed due to uncertainty in its surroundings. Thus the probability of success when being teleoperated is the same regardless of whether the robot is in its normal state or stopped in the fault state, i.e., $p^{1}_{n0}=p^{1}_{n1}$ and $q^{1}_{n0}=q^{1}_{n1}=0$.} The corresponding transition dynamics are shown in Figure~\ref{fig:state_transition_type-2}. Note that in this case $r^{1}_{n0}=r^{1}_{n1}=1 - p^{1}_{n0}$.
	
	\begin{figure}[ht]
		\centering
		\includegraphics[width=0.75\columnwidth]{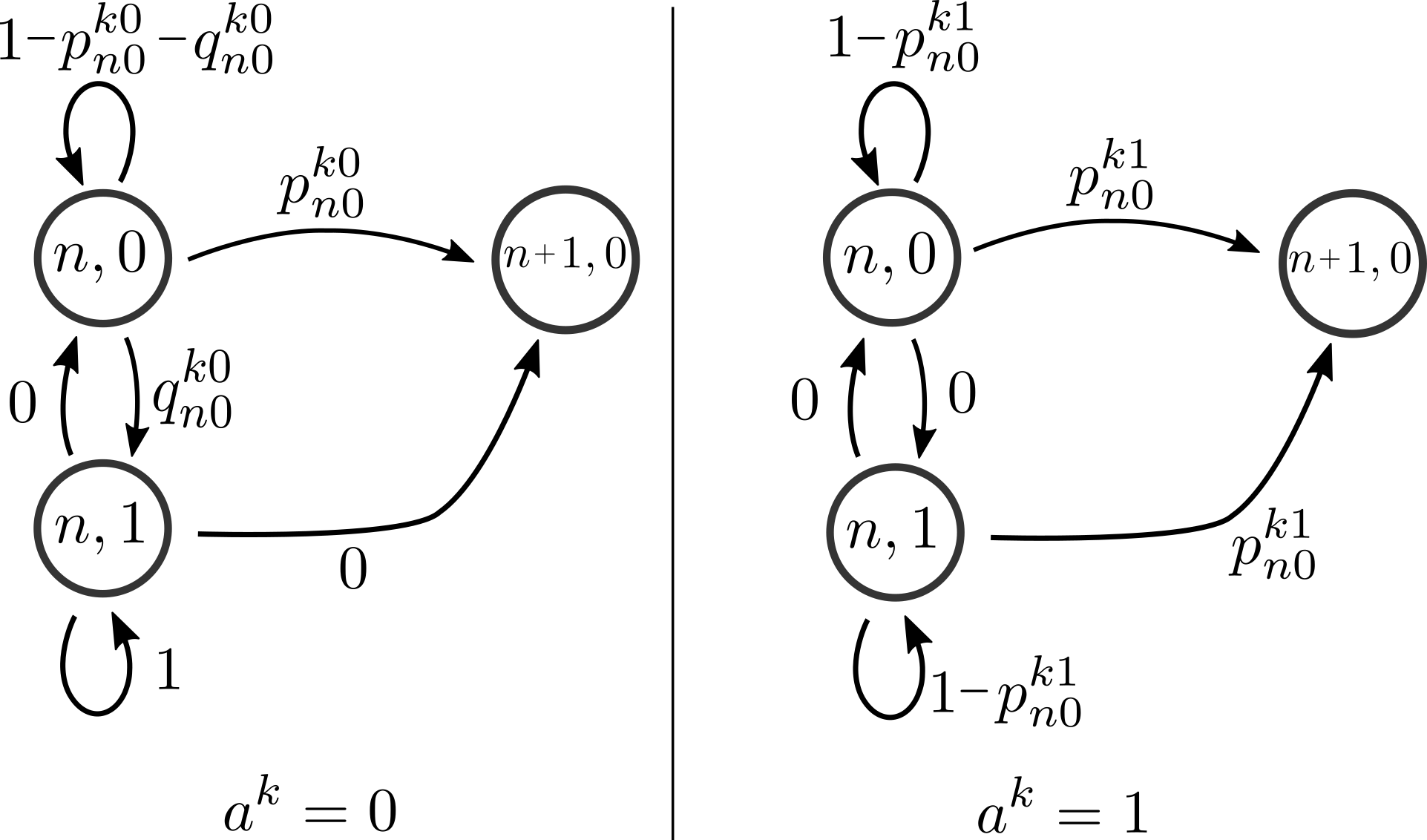}
		\caption{State-transition probabilities under the autonomous operation and teleoperation for type-1 transitions. 
		}
		\label{fig:state_transition_type-2}
	\end{figure}
	
	In this case, the coefficients $\alpha_1(n)$ and $\beta_0(n)$ can be simplified to the following expressions:
	\begin{align}
		\alpha_1(n) &= 1 - \frac{\gamma q^0_{n0}}{1-\gamma r^0_{n0}}, \nonumber \\[\lessverticaldistance]
		\beta_0(n) &=  \frac{\gamma (1-\gamma)(r^0_{n0}- r^1_{n0}) + \gamma q^0_{n0}(1-\gamma\,r^1_{n0})}{1-\gamma \,r^0_{n0}}.
		\label{eq:a1a2_prob2}
	\end{align}
	Note that 
	\begin{align*}
		\alpha_1(n) 
		&= \frac{1-\gamma + \gamma p^0_{n0}}{1 - \gamma r^0_{n0}}
		\ge 0,
		\\
		\frac{\beta_0(n)}{1 - \gamma} + 1 &\ge
		\frac{\gamma(r^0_{n0} - r^1_{n1})}{1 - \gamma r^0_{n0}} + 1
		= \frac{1 - \gamma r^1_{n1}}{1 - \gamma r^0_{n0}} \ge 0.
	\end{align*}
	Thus, $\alpha_1(n)$ and $\beta_0(n)$ satisfy the sufficient condition of Theorem~\ref{th:single_robot_indexability} for all allowed values of transition probabilities and the discount factor $\gamma$. Therefore, any robot following the Type-1 transitions is indexable. 
	
	
	
	\subsection{Transition Type-2 : Faults with reset}
	\label{sec:problem-3}
	Consider another type of transition 
	where the robot can get into a fault state and needs error fixing while staying at its next waypoint. This includes scenarios such as losing localization or getting stuck in a minor obstacle. The human operator can try to assist the robot out of that situation by fixing the fault, resetting it back to its current waypoint (assuming the system is equipped with means to do so). Such transitions will mean that the probabilities $q^{1}_{n0}=p^{1}_{n1}=0$ and the corresponding transition dynamics are shown in Fig.\ref{fig:state_transition_type-3}:
	%
	\begin{figure}[ht]
		\centering
		\includegraphics[width=0.75\columnwidth]{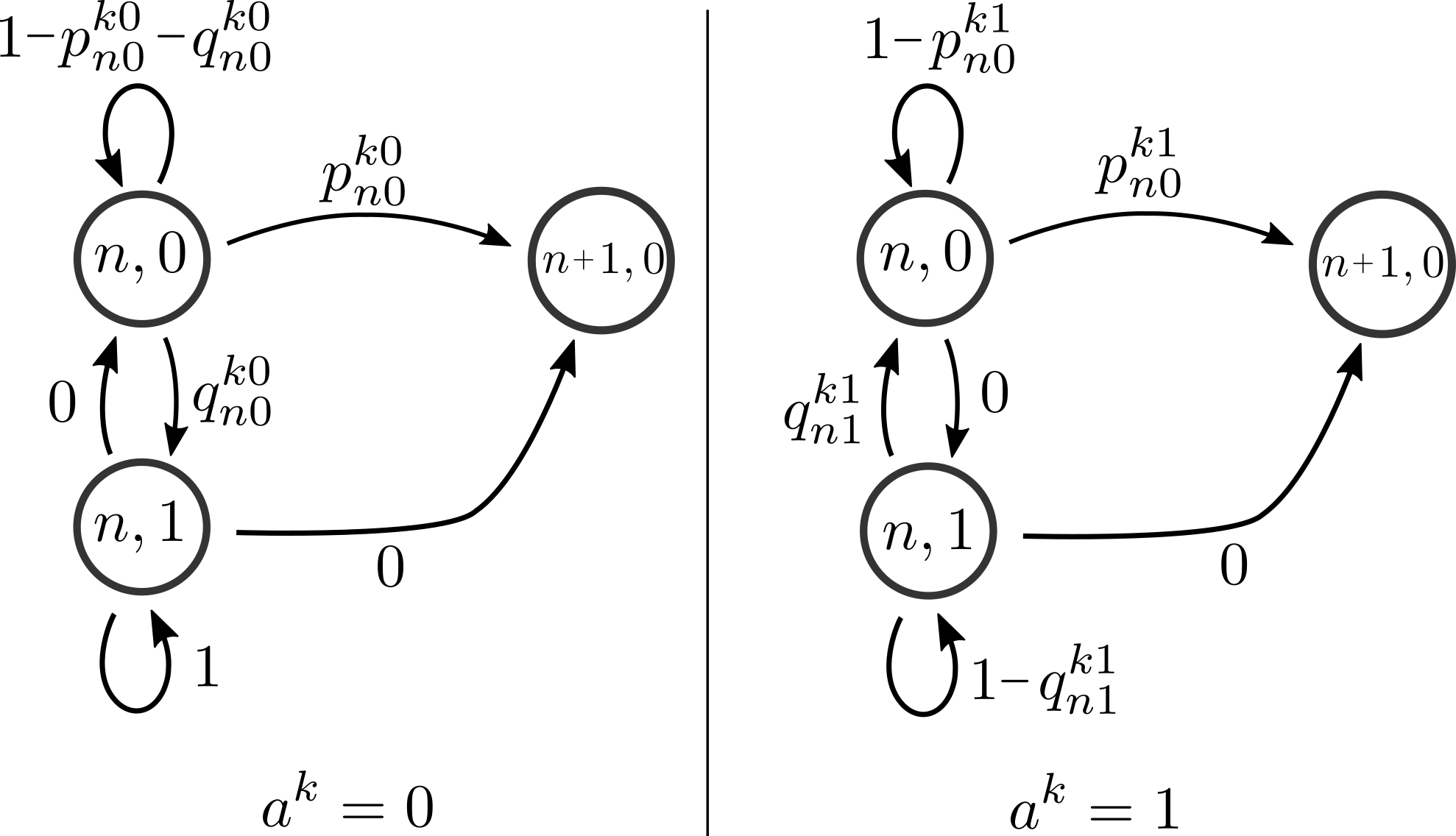}
		\caption{State-transition probabilities under the autonomous and assist/teleoperate actions for Type-2 transition dynamics. 
		}
		\label{fig:state_transition_type-3}
	\end{figure}
	
	\blu{Substituting the values of transition probabilities from Fig.~\ref{fig:state_transition_type-3} to the expressions of $\alpha_1(n)$ and $\beta_0(n)$, the coefficients can be simplified to the following}:
	\begin{align}
		\alpha_1(n) &= 1 - \frac{\gamma q_{n0}^0 (1 - \gamma r^1_{n0})}{(1-\gamma r^0_{n0})(1-\gamma (1-q^1_{n1})) - \gamma^2 q^0_{n0} q^1_{n1}}, \nonumber \\
		\beta_0(n) &=  \frac{\gamma (1-\gamma)(r^0_{n0}- r^1_{n0}) + \gamma q^0_{n0}(1-\gamma\,r^1_{n0})}{1-\gamma \,r^0_{n0}}.
		\label{eq:a1a2_prob3}
	\end{align}
	Note that $\beta_0(n)$ here is the same as~\eqref{eq:a1a2_prob2} and, therefore, satisfies~\eqref{eq:cond_a1_geq0}. For $\alpha_1(n)$ to satisfy~\eqref{eq:cond_a1_geq0}, \blu{the condition $\alpha_1(n) \geq 0$ results in the following condition on $q^1_{n1}$:}
	\begin{align}
		%
		q^1_{n1} &\geq 1 - \frac{1}{\gamma} + \frac{\gamma q^0_{n0}p^1_{n0}}{1 - \gamma r^0_{n0}- \gamma q^0_{n0}}.
		\label{eq:cond_a1_prob3}
	\end{align}
	%
	As $q^1_{n1} \leq 1$,~\eqref{eq:cond_a1_prob3} also yields the following condition on $q^0_{n0}$:
	\begin{equation}
		q^0_{n0} \leq \frac{1-\gamma r^0_{n0}}{\gamma(1 +\gamma p^1_{n0})}.
		\label{eq:cond_a1_prob3_p1i}
	\end{equation}
	Therefore, any robot state following the Type-2 transitions will satisfy the condition of indexability in Theorem~\ref{th:non-dec_B} if~\eqref{eq:cond_a1_prob3} and \eqref{eq:cond_a1_prob3_p1i} are satisfied, i.e., the probability $q^0_{n0}$ that the robot transitions from a normal state to fault state during autonomous operation is not too high and the probability $q^1_{n1}$ that the operator brings the robot from a fault state to a normal state is not too small.
	
	As an example, consider a robot following Type-2 transitions with $p^0_{n0} = q^0_{n0} = p^1_{n0} = 0.3$ and $\gamma=0.95$. In this setting, any $q^0_{n0} \in [0,1]$ satisfies \eqref{eq:cond_a1_prob3_p1i} and any $q^1_{n1} \in [0.1462,1]$ satisfies~\eqref{eq:cond_a1_prob3}. Thus, the model is indexable if there is at least a $14.62\%$ chance that teleoperation successfully resets the robot from the fault state to a normal state.

	\section{Computation of Whittle Index}
	\label{sec:index_policy}
	As discussed in Section \ref{sec:background}, once the indexability of the problem instance has been verified, we can compute Whittle indices for all robots and determine the Whittle index policy.
	
	
	General approaches of computing Whittle indices are either based on adaptive greedy algorithm \cite{nino2007dynamic, akbarzadeh2021conditions} or binary search \cite{akbarzadeh2019dynamic}. In this section, we briefly provide details on adaptive greedy algorithm and describe how the Whittle index policy works. Readers are encouraged to refer to \cite{akbarzadeh2021conditions} for a detailed explanation and validation of the algorithm. The algorithm is presented in Alg.~\ref{alg:Whittle_index_adaptive} for computing Whittle indices for a single robot.

	\begin{algorithm}[ht]
		\begin{algorithmic}[1]
			\STATE \textbf{Input:} Robot~$(\mathcal{X},\mathcal{A},T,C,\gamma, x_0)$.
			\STATE Initialize ${\cal P} = \emptyset$.
			\WHILE{${\cal P} \neq {\cal X}$}
			\STATE Compute $\mu^*_{y}$, $\forall y \in {\cal X} \backslash {\cal P}$ using Eq.~\eqref{eq:mu_y}.
			\STATE $\lambda^* \leftarrow \min_{y \in {\cal X} \backslash {\cal P}} \;\mu^*_{y}$ \label{line:lambda^*}
			\STATE $\mathcal{Y} \leftarrow \arg \min_{y \in {\cal X} \backslash {\cal P}} \;\mu^*_{y}$ \label{line:Z}
			\STATE $w(y) \leftarrow \lambda^*$, $\forall y \in \mathcal{Y}$
			\STATE ${\cal P} \leftarrow {\cal P} \cup \mathcal{Y}$ 
			\ENDWHILE
		\end{algorithmic} 
		\caption{Adaptive Greedy Algorithm for Whittle Index Computation}
		\label{alg:Whittle_index_adaptive}
	\end{algorithm}
	
	%
	The algorithm operates as follows: 
	For any subset ${\cal Z} \subseteq {\cal X}$, define the policy vector\footnote{\blu{In the following expressions $\pi$ is used as a vector of size $|\mathcal{X}|$, constructed as a mapping from each state to corresponding action $a\in\{0,1\}$.}} $\pi^{({\cal Z})}: {\cal X} \to \{0, 1\}$ as 
	\begin{equation*}
		\pi^{(\cal Z)}(x) = 
		\begin{cases}
			0, ~ \text{ if } x \in {\cal Z} \\
			1, ~ \text{ if } x \in {\cal X}\backslash{\cal Z}.
		\end{cases}
	\end{equation*}
	Also define, $C_{\pi} = \pig[C(x, \pi(x))\pig]_{x \in {\cal X}}$, the cost vector for all states under a policy $\pi$, and $T_\pi = \pig[T(x'|x,\pi(x))\pig]_{x,x' \in {\cal X}}$, the transition matrix under policy $\pi$.
	
	Then, in each iteration of the while loop, compute $\mu^*_y$ as follows:
	\begin{align}  
		\mu_{y}(x) & = -\frac{D_{\pi^{({\cal P})}}(x) - D_{\pi^{({\cal P} \cup \{y\})}}(x)}{N_{\pi^{({\cal P})}}(x) - N_{\pi^{({\cal P} \cup \{y\})}}(x)}, ~ \forall x \in {\cal X} \nonumber\\[\lessverticaldistance]
		\mu^*_{ y} & = \min_{x \in \Lambda_{ y}} \mu_{ y}(x),
		\label{eq:mu_y}
	\end{align}
	where
	\begin{align*} 
		D_{\pi}(x) &= \pig[(I - \gamma T_{\pi})^{-1} C_\pi\pig](x)\;\;, \\
		N_{\pi}(x) &= \pig[(I - \gamma T_{\pi})^{-1} \pi\pig](x) \;\;, \\
		\Lambda_{y} &= \{s \in {\cal X}: N_{\pi^{({\cal P})}}(x) - N_{\pi^{({\cal P} \cup \{y\})}}(x) \neq 0\}.
	\end{align*}
	The minimum value of $\mu^*_y$ calculated in line~\ref{line:lambda^*} in Alg.~\ref{alg:Whittle_index_adaptive} corresponds to the Whittle indices of the minimizing states (Line~\ref{line:Z}). These states are then taken out of consideration in the next iteration of the while loop by including them in the passive set $\mathcal{P}$.
	%
	When the Alg.~\ref{alg:Whittle_index_adaptive} exits the while loop, the Whittle indices for all states of that robot are calculated. This procedure is then repeated for all the robots in the system.
	
	Once the Whittle indices for all states of all robots are obtained, the Whittle index policy can be implemented as given in Alg.~\ref{alg:index_policy}. \blu{In line~\ref{alg:line:argtopM} of the algorithm, the function $\texttt{arg\_top\_M}(\{w^k(x^k)\})$ returns indices of top $M$ positive elements in a set, where ties are broken randomly.} As determined in \cite{akbarzadeh2021conditions}, the computational complexity of this method is ${\cal O}(K |{\cal X}|^3)$. In contrast, the computational complexity of finding the optimal policy for Problem~\ref{prob:main} is ${\cal O}({K \choose M} |{\cal X}|^{2K})$ using value iteration, where $|{\cal X}|$ is the size of state space of individual robot.
	%
	%
	\begin{algorithm}[h!]
		\begin{algorithmic}[1]
			\STATE \textbf{Input:} Set of Whittle indices $w^k(x^k)$ for all $k \in \{1,\ldots, K\}$ and $x^k \in \mathcal{S}^k$, No. of Operators $M$
			\STATE $\mathcal{M} \leftarrow \texttt{arg\_top\_M}(\{w^k(x^k)\})$ \label{alg:line:argtopM}
			
			\STATE $a^{k} \leftarrow 0$ for all $k\notin\mathcal{M}$
			\STATE $a^{k} \leftarrow 1$ for all $k\in\mathcal{M}$ \quad\quad \gray{// Allocate operators}
			\RETURN $(a^1,\ldots,a^K)$
		\end{algorithmic}
		\caption{Whittle Index Policy $\pi^{I}$} 
		\label{alg:index_policy}
	\end{algorithm}

	\section{Simulations and Results}
	\label{sec:simulation}
	
	
	
	
	In this section, we present performance results for a simulated multi-robot assistance problem under the following policies (described later): 1)~Optimal policy, 2)~Index policy, 3)~Benefit maximizing Policy, 4)~Myopic Policy, and 5)~Reactive Policy. 
	The problem and the solution frameworks for all policies were implemented using POMDPs.jl library in Julia \cite{egorov2017pomdps}.

	\subsection{Simulation Setup}
	For the simulations, a city map is generated as shown in Fig. \ref{fig:simulation_setup} where the map is randomly divided into different zones corresponding to one of the two transition types presented in Section~\ref{sec:example_problems}.
	\begin{figure}
		\centering
		\includegraphics[width=0.75\columnwidth]{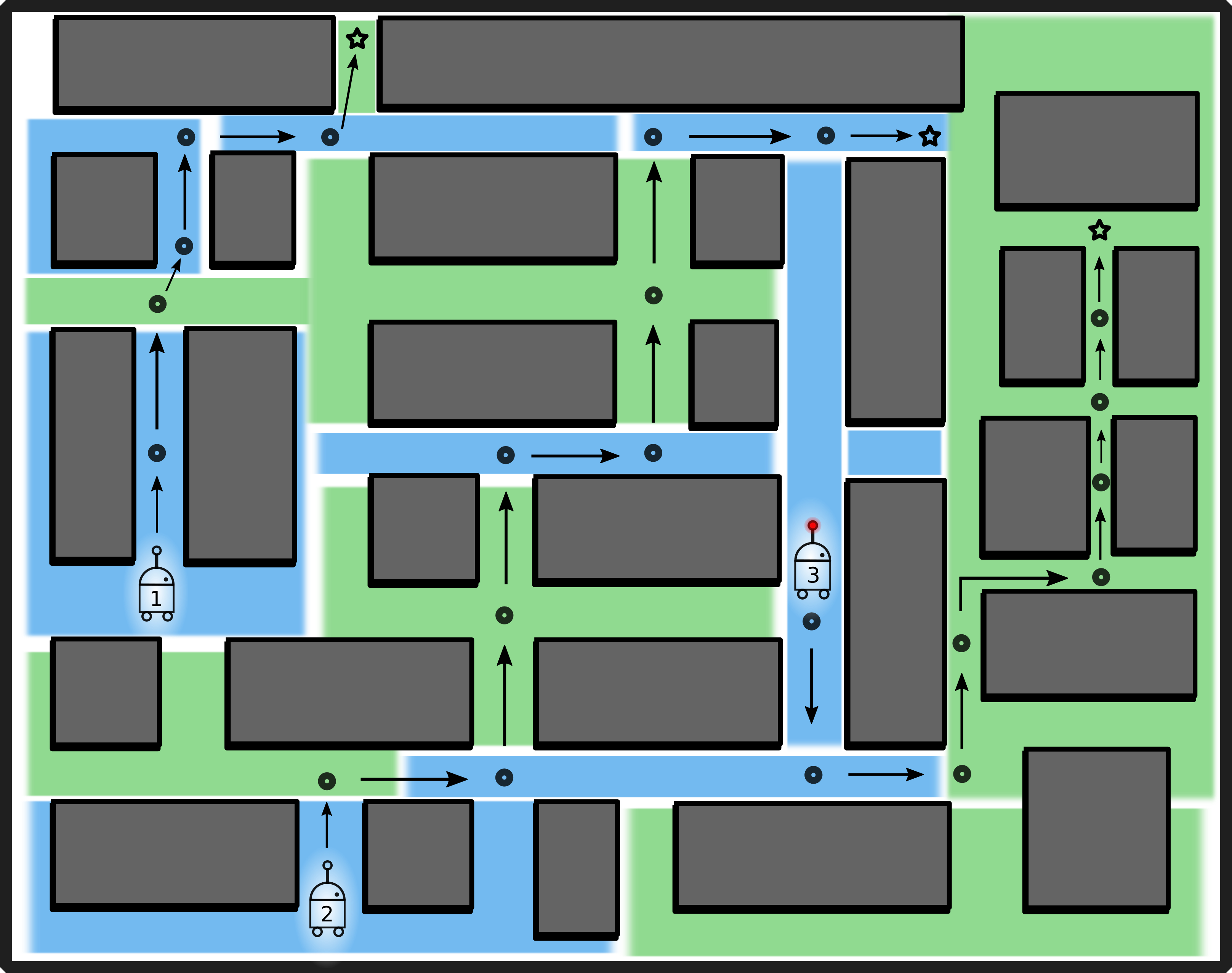}
		\caption{An instance of multi-robot assistance problem for robots navigating in a city block-like environment. Transition zones are marked by different color shadings. The green and blue areas represent type-1 and type-2 transitions respectively, as described in Section~\ref{sec:example_problems}. Three robots are shown navigating to their corresponding goal locations, via a sequence of waypoints shown as black circles. These waypoints may lie in different transition zones resulting in varied performance for the robots.}
		\label{fig:simulation_setup}
	\end{figure}
	
	The exact values of transition probabilities at different locations in the map are sampled randomly from a uniform distribution, according to Table~\ref{table:probs}. 
	The bounds on transition probabilities $\overline{q}^{k1}_{n1}$ and $\overline{q}^{k0}_{n0}$ for transition type-2 are determined by~\eqref{eq:cond_a1_prob3} and \eqref{eq:cond_a1_prob3_p1i} respectively.
	
	{\renewcommand{\arraystretch}{1.2}
		\begin{table}[ht]
			\centering
			\caption{Probabilities values used for simulations for the two types of transitions.}
			\label{table:probs}
			\begin{tabular}[pos=c]{lll} 
				\toprule
				Probability & Type-1 & Type-2\\
				\midrule
				$r^{k0}_{n0}$ & $[0.2,0.5]$ & $[0.2,0.5]$\\[0.1ex]
				$q^{k0}_{n0}$ & $[0.2,0.5]$ & $[0.1, \min\{\overline{q}^{k0}_{n0}, 1-r^{k0}_{n0}\}]$\\[0.1ex] 
				$r^{k1}_{n0}$ & $[0.1,0.4]$ & $[0.1,0.4]$ \\[0.1ex]
				$q^{k1}_{n0}$ & $0.0$ & $0.0$  \\[0.1ex]
				$r^{k1}_{n1}$ & $r^{k1}_{n0}$ & $1-q^{k1}_{n1}$\\[0.1ex] 
				$q^{k1}_{n1}$ & $0.0$ & $[\max\{\overline{q}^{k1}_{n1}, 0.1\}, 0.9]$ \\[-0.3ex]
				\bottomrule
			\end{tabular}
			\vspace{-2ex}
		\end{table}
	}
	For the teleoperation problem, we use the following cost structure:
	\begin{equation}
		C^k\left((n,s),a\right) = \begin{cases}
			0 & \text{if } n = G \\
			\rho^k_n &  \text{if } a=0, s = 0 \\
			\phi^k_n &  \text{if } a=0, s = 1 \\
			\rho^k_n +\rho_T^k &  \text{if } a=1, s = 0 \\
			\phi^k_n +\rho_T &  \text{if } a=1, s = 1,
		\end{cases}
		\label{eq:costFn}
	\end{equation}
	with $\rho^k_n, \phi^k_n, \rho_T^k \in \real_{\geq0}$ for any $n \in \{1, \ldots, N^k\}$.
	This cost function captures the time that a robot takes to reach its goal, i.e., zero cost on reaching goal state, non-negative costs for the intermediate states, and a fixed additional cost of $\rho^k_{T}$ while being assisted. 
	
	Values of the different costs and the discount factor used are sampled from ranges specified in Table~\ref{table_values}.
	{\renewcommand{\arraystretch}{1.0}
		\begin{table}[ht]
			\centering
			\caption{Parameter values used in the simulated assistance task.}
			\label{table_values}
			\begin{tabular}[pos=c]{ccccc} 
				\toprule
				Parameter & $\rho^~_T$ & $\rho$ & $\rho_e$ & $\gamma$\\ [0.1ex] 
				\midrule
				Value & $0.75$ & $2.0$ & $4.0$ & $0.99$\\[0.1ex] 
				\bottomrule
			\end{tabular}
			\vspace{1ex}
			
			{\raggedright \blu{For simplicity, for a given parameter, same range is used for every state of all robots. Therefore, we remove the superscript $k$.} \par}
		\end{table}
	}
	
	
	Separate tests were performed to test the validity, performance and scalability of the Index policy. At the beginning of each simulation, a number of robots are placed on the map (ranging from $1$ to $25$) with randomly generated start and goal locations, and $7$ waypoints uniformly placed between the two. In practice, these waypoints are generated by a separate robot path planner for each individual robot, and are considered as an input for the operator allocation problem. 
	
	\subsection{Baseline Policies}
	We consider the following baseline policies to assess the performance of the Index policy.
	\subsubsection*{Optimal policy} 
	The Optimal policy $\pi^*: \pmb{\mathcal{X}} \rightarrow \mathcal{A}$, as defined by~\eqref{eq:policy-cost}, is found by encoding the complete problem with all robots as an MDP and solving it using the \emph{Sparse Value Iteration Solver} from the POMDP.jl library. 
	
	
	\subsubsection*{Reactive policy}
	The reactive policy simply allocates an operator to any robot stuck in a fault state. If there are more such robots than operators, a random subset of those robots is selected.
	
	\subsubsection*{Myopic policy} 
	Myopic/Greedy Policies are commonly used to obtain fast (but sub-optimal) solutions to intractable problems. 
	In this paper, we implement an $l$-step myopic policy presented in \cite{rosenfeld2017intelligent} for $l\in\{1,2\}$.
	%
	Define $V^{\mathbf{0}}(\boldsymbol{x}_{t+1})$ as the expected cost incurred by the system from current time step to infinity under passive actions. The $l$-step myopic policy $\pi^{G\mhyphen l}: \pmb{\mathcal{X}} \rightarrow \mathcal{A}$ is then defined as
	\begin{align}
		\pi^{G\mhyphen l}(\boldsymbol{x}_t) = \arg\min_{\boldsymbol{a}\in \mathcal{A}} \; g(\boldsymbol{x}_t, \boldsymbol{a}, l), 
		\label{eq:myopic_policy}
	\end{align}
	where the $l$-step look-ahead cost $g(\boldsymbol{x}_t,\boldsymbol{a},l)$ is given by
	\begin{equation*}
		g(\boldsymbol{x}_t, \boldsymbol{a}, l) = \begin{cases}
			{C}\left(\boldsymbol{x}_t,\boldsymbol{a}\right)+ \\\sum\limits_{\boldsymbol{x}_{t+1}} \gamma\; T(\boldsymbol{x}_{t+1}|\boldsymbol{x}_{t},\boldsymbol{a})\, g(\boldsymbol{x}_t, \boldsymbol{a}, l-1),~ \text{if } l = 0,
			\\
			V^{\mathbf{0}}(\boldsymbol{x}_{t+1}),\qquad\qquad\qquad\qquad\qquad\text{otherwise,}
		\end{cases}
	\end{equation*}
	where
	\[C\left(\boldsymbol{x},\boldsymbol{a}\right) = \sum_{k=1}^K C^k\left(x^k,a^k\right),\]
	is the cost incurred in the current time step.
	%

	\subsubsection*{Benefit maximizing policy}
	The Benefit maximizing policy is inspired by the \emph{advantage function} used in reinforcement learning, see for example~\cite{schulman2015high}. This policy considers the \textit{benefit} or \textit{advantage} of taking the active action over the passive action for each robot and picks the robots with highest benefit at each time step, i.e.,
	\begin{equation}
		\pi^B(\boldsymbol{x}_t) = \arg\min_{\boldsymbol{a} \in \mathcal{A}}\; \sum_{k=1}^K a^k \; B_0(x^k_t),
	\end{equation}
	where $B_0(x)$ corresponds to $B_\lambda(s)$ defined in~\eqref{eq:B_lambda} with $\lambda=0$.
	
	
	\subsection{Comparison with the Optimal policy} First, the Index policy is compared against the Optimal policy to validate its applicability for our problem. 
	%
	Due to its poor scalability, the Optimal policy cannot be computed for larger problem instances, therefore this test is limited to smaller problem size (up to $4$ robots and $2$ operators). The relative performance (ratio of the cost incurred under Optimal policy to that under Index policy) is shown in Fig.~\ref{fig:validity}.
	For comparison, $100$ problem instances were tested under both policies and were simulated through Monte Carlo rollouts. 
	\blu{Each problem instance is run until all robots reach their respective goal locations. This is repeated $100,000$ times and average cost is recorded.}
	\begin{figure}[ht]
		\centering
		\includegraphics[width=0.9\columnwidth]{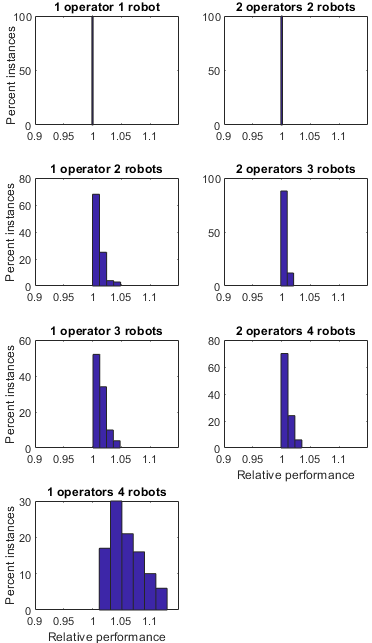}
		\caption{Relative performance of Index policy compared to the optimal policy. The plots show distribution of $100$ indexable problem instances based on their performance under the two policies. \blu{Relative performance is calculated as the ratio of the cost incurred under optimal policy to that under index policy.} 
		}
		\label{fig:validity}
	\end{figure}
	
	It is observed that the Index policy performs quite close to the optimal policy for all test cases. As the ratio of number of robots to number of operators increases, the Index policy starts to degrade in comparison to the optimal. However, the relative cost still remains within $1.13$ of the optimal. \blu{Also, note that the optimal policy minimizes the expectation of the cost incurred. The graph shows distribution of the relative performance, which is centered around 1. Therefore, we still see some test runs where the rollout resulted in a slightly lesser cost for the index policy. 
		(see Fig.~\ref{fig:validity}). However, as number of test runs are increased the average relative performance approaches $1$}.
	
	\subsection{Comparison with other baseline policies} 
	Next, we compare the performance \blu{(measured as average cost incurred per robot before reaching its goal)} of the Index policy with the three baseline policies on larger problem instances.
	For the comparison, a set of $100$ problem instances is created, each with a set of $7$ waypoints with randomly sampled transition probabilities according to Table~\ref{table:probs}. Each instance of the problem is then simulated separately under the different policies using Monte Carlo rollouts until all robots reach their goal states, repeated for 500 iterations. \blu{Each simulation iteration (rollout) is timed out at $10$ seconds for each policy. If an iteration takes longer than this time, the simulations are interrupted and the result for that test condition is not reported.}
	
	Figure~\ref{fig:comparison} shows performance comparison of the four policies. 
	The Index policy performs best out of the four policies, followed by the benefit maximizing policy ($\pi^B$) and the myopic policy ($\pi^G$). The Reactive policy performed the worst as expected. 
	As a side note, the average cost incurred per robot under any policy is strongly correlated with the ratio of number of robots to the number of available operators. This observation supports the intuition that as human operators are required to distribute their assistance among more robots, their effectiveness decreases.
	%
	%
	\begin{figure}
		\centering
		\includegraphics[width=0.9\columnwidth]{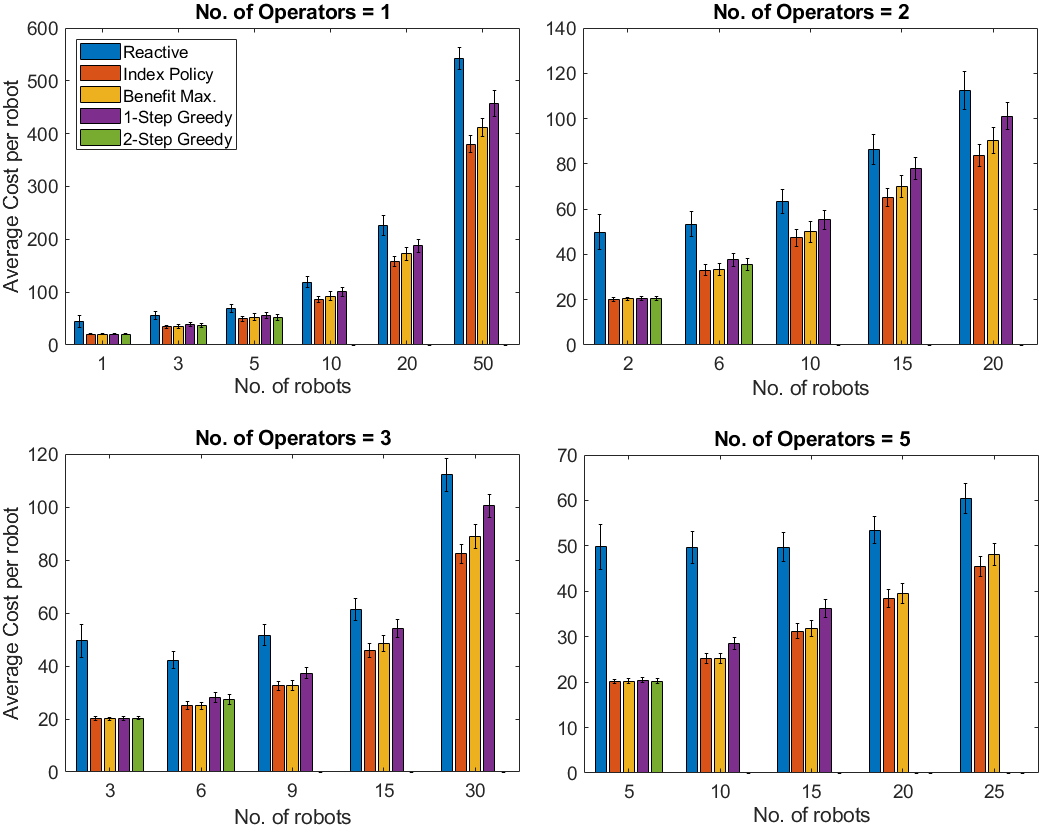}
		\caption{Performance comparison of the four policies for different number of operators available for allocation. Error bars in the plots show one standard-deviation above and below the average. \blu{Note that in larger problem instances, the simulations for the myopic policies timed out and could not be completed in the specified time limit (10 seconds per rollout)}.}
		\label{fig:comparison}
	\end{figure}
	\subsection{Scalability} \color{black}
	\blu{Table~\ref{table:times} shows the time that each policy takes to compute operator the allocation under different problem sizes. For these simulations, each robot is set to have $7$ waypoints.}
	As observed from Table~\ref{table:times}, the online computation times of the two Myopic policies scale exponentially with both the number of robots and the number of operators, with the time for $2$-step Myopic policy growing at a much higher rate.
		
		The computation times for the Index and Benefit maximizing policies scale linearly with the number of robots and are independent of the number of operators. 
		Also, note that the Whittle index computation for one robot is independent from the rest. Therefore robots can be added/removed without re-computation of already computed indices.
	Furthermore, if the number of operators change to $M > 1$, the policy simply allocates operators to the robots with the $M$ highest Whittle indices. As a result, the policy is efficiently scalable with the number of robots and operators.
	For reference, the simulations were run on a Desktop PC with a $4$ core, $4.20$ GHz processor and $32$ GB of RAM.
	
	
	{\renewcommand{\arraystretch}{1.0}
		\begin{table}[ht]
			\centering
			\caption{\blu{Computation times of different policies (seconds)}}
			\label{table:times}
			\begin{tabular}{p{0.15\linewidth} | p{0.16\linewidth} | p{0.16\linewidth} | p{0.16\linewidth} | p{0.16\linewidth}} 
				\toprule
				Operators/ \hspace*{\fill} Robots & Index Policy & $1$-step \mbox{Myopic} & $2$-step \mbox{Myopic} & Benefit Maximizing\\ \hline
				\rule{0pt}{2.5ex}$2/6$ & $1.2e^{-6}$ & $3.8e^{-3}$ & $2.4e^{-1}$ & $3.4e^{-6}$\\
				$3/6$ & $1.2e^{-6}$ & $6.2e^{-3}$ & $4.8e^{-1}$ & $3.4e^{-6}$\\
				$4/6$ & $1.2e^{-6}$ & $8.1e^{-3}$ & $6.6e^{-1}$ & $3.4e^{-6}$\\
				$1/9$ & $1.7e^{-6}$ & $9.6e^{-2}$ & $4.8e^{+0}$ & $4.9e^{-6}$\\
				$2/9$ & $1.7e^{-6}$ & $2.6e^{-1}$ & $2.2e^{+1}$ & $4.9e^{-6}$\\
				$3/9$ & $1.7e^{-6}$ & $6.4e^{-1}$ & $6.2e^{+1}$ & $4.9e^{-6}$\\[-0.5ex]
				\bottomrule
			\end{tabular}
		\end{table}
	}
	\vspace{-2ex}

	\section{Conclusions and Discussion}
	\label{sec:discussion}
	In this paper, we provide an analysis of operator allocation problem for a multi-robot assistance task and demonstrate the effectiveness of Restless Bandit framework to obtain a scalable policy. This policy is based on Whittle index heuristic and performs close to the optimal and significantly better than other efficient solution approaches. 
	We also provide an analysis of indexability of such problems and give a simplified condition to quickly verify if a problem instance is indexable. These results can also be used to specify required transition behavior in the form of bounds on transition probabilities.
	
	\blu{There are, however, a few limitations of the proposed approach. When a problem instance is not indexable, the Whittle indices are not defined and the methods of computing these indices may not give meaningful values. Therefore, the proposed approach is not applicable in such cases.} 
	\color{black}
	Also, note that the conditions for indexability identified in Theorem~\ref{th:single_robot_indexability} are sufficient but not necessary. \blu{However, the assistance problem presented here is mostly indexable. This was verified by randomly generating problem instances without the bounds given in Table~\ref{table:probs}
		and numerically verifying monotonicity of the passive set $\mathcal{P}(\lambda)$. Out of the $1000$ random instances, $999, 992$ and $940$ instances were found to be indexable for discount factors $\gamma = 0.9, 0.95, 0.99$ respectively.
		However, the conditions were satisfied for $700, 607$ and $452$ of these instances for $\gamma = 0.9, 0.95, 0.99$.
		This suggests that model is, in general, indexable and the Whittle index heuristic is applicable. It also suggests that the system may benefit from improved conditions for indexability.}
	
		\bibliographystyle{IEEEtran}

	\begin{IEEEbiography}{Abhinav Dahiya} is a PhD student in the Autonomous Systems Lab. in the department of Electrical and Computer Engineering, University of Waterloo, Canada. He received his Bachelor's degree in electrical engineering from Indian Institute of Technology Roorkee, India in 2016. His research interests include control and characterization of multi-agent systems involving human-robot interaction.
	\end{IEEEbiography}
	\vskip 0pt plus -1fil
	\begin{IEEEbiography}{Nima Akbarzadeh}
		(S’17) is a PhD student in the Electrical and Computer Engineering, McGill University, Canada. He received the B.Sc. degree in Electrical and Computer Engineering from Shiraz University, Iran, in 2014, the M.Sc. in Electrical and Electronics Engineering from Bilkent University, Turkey, in 2017. He is a recipient of 2020 FRQNT PhD Scholarship. His research interests include stochastic control, reinforcement learning and multi-armed bandits.
	\end{IEEEbiography}
	\vskip 0pt plus -1fil
	\begin{IEEEbiography}{Aditya Mahajan}
		(S’06-M’09-SM’14) is Associate Professor in the department of Electrical and Computer Engineering, McGill University, Montreal, Canada. He is Associate Editor for IEEE Transactions on Automatic Control and Springer Mathematics of Control, Signal, and Systems. He was an Associate Editor of the IEEE Control Systems Society Conference Editorial Board from 2014 to 2017. 
		He is the recipient of the 2015 George Axelby Outstanding Paper Award, 2014 CDC Best Student Paper Award (as supervisor), and the 2016 NecSys Best Student Paper Award (as supervisor). His principal research interests include learning and control of centralized and decentralized stochastic systems.
	\end{IEEEbiography}
	\vskip 0pt plus -1fil
	\begin{IEEEbiography}{Stephen L. Smith} (S'05--M'09--SM'15) is an Associate Professor in the department of Electrical and Computer Engineering, University of Waterloo, Canada where he holds a Canada Research Chair in Autonomous Systems.  He is an Associate Editor for the IEEE Transactions on Control of Network Systems, and is the Co-General Chair for the 2021 IEEE International Conference on Robot and Human Interactive Communication (RO-MAN).  His main research interests lie in control and optimization for autonomous systems, with an emphasis on robotic motion planning and coordination.
	\end{IEEEbiography}

	\onecolumn
	\appendix
	For any fixed value of $\lambda$, the value function $V_\lambda(x)$ can also be written as 
	\begin{align*}
		V_\lambda(x) = \min_{\pi\in\Pi} \mathbb{E} \bigg[\sum_{t=0}^T \pig[C(X_t,A_t) + \lambda A_t\pig]|X_0=x \bigg],
	\end{align*}
	where $\Pi$ denotes the set of all Markov policies from $\mathcal{X}$ to $\{0,1\}$. Since the state space $\mathcal{X}$ is finite, so is $\Pi$. Thus, $V_\lambda(x)$ is the minimum of a finite number of functions, each of which is linear in $\lambda$. Therefore, $V_\lambda(x)$ is continuous and piecewise linear, with a finite number of corner points. This means $V_\lambda(x)$, and therefore $Q_\lambda(x,a)$ and $B_\lambda(x)$, are non differential w.r.t. $\lambda$ at a finite number of points. Therefore, $B_\lambda(x)$ is monotonically increasing if $\partial B_\lambda(x)/\partial\lambda$, wherever it exists, is non-negative.
	Let $\Lambda^*(x)$ denote the finite set of values where $B_\lambda(x)$ is non-differentiable. Let $\Lambda^* = \cup_{x\in\mathcal{X}}\Lambda^*(x)$, which is also finite.
	
	The main idea for the proof of Theorem~\ref{th:single_robot_indexability} is to show that if~\eqref{eq:cond_a1_geq0} is satisfied then $\partial B_\lambda(x)/\partial\lambda$ is non-negative for $\lambda\notin\Lambda^*$. Then, Lemma~\ref{th:non-dec_B} implies the indexability of the problem.
	
	Now fix an $n\in\{1,\ldots,N\}$. Define $z = (n, 0)$, $z' = (n+1, 0)$ and $e = (n, 1)$. Then using Eq.~\eqref{eq:Q_value_expanded} and Fig.~\ref{fig:state_transition}, we have
	\begin{align}
		Q_\lambda(z,a) &= C(z,a) + \lambda a + \gamma q^a_{n0} V_\lambda(e) \nonumber\\&\qquad\qquad\;+ \gamma p^a_{n0} V_\lambda(z') + \gamma r^a_{n0} V_\lambda(z).\label{eq:Q_z}\\
		Q_\lambda(e,a) &= C(e,a) + \lambda a + \gamma q^a_{n1} V_\lambda(z) \nonumber\\&\qquad\qquad\;+ \gamma p^a_{n1} V_\lambda(z') + \gamma r^a_{n1} V_\lambda(e).
		\label{eq:Q_e}
	\end{align}
	Then we have the following results:
	
	\section{Preliminary Results}
	\begin{lemma}
		For all $\lambda\notin\Lambda^*$,
		\begin{align*}
			0 \leq \frac{\partial V_\lambda(x)}{\partial \lambda} \leq \frac{1}{1-\gamma}, \quad\forall x \in \mathcal{X}.
		\end{align*}
		\label{th:partialV_bound}
	\end{lemma}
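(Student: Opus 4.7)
The plan is to exploit the representation of $V_\lambda(x)$ already established in the excerpt as the minimum of a finite collection of functions, each linear in $\lambda$. Specifically, for each Markov policy $\pi \in \Pi$, define
\begin{equation*}
    J^\pi_\lambda(x) = \mathbb{E}^\pi\!\left[\sum_{t=0}^\infty \gamma^t\pig(C(X_t, A_t) + \lambda A_t\pig)\,\bigg|\,X_0 = x\right] = u^\pi(x) + \lambda\, m^\pi(x),
\end{equation*}
where $u^\pi(x) = \mathbb{E}^\pi\bigl[\sum_t \gamma^t C(X_t,A_t)\mid X_0=x\bigr]$ and the slope is $m^\pi(x) = \mathbb{E}^\pi\bigl[\sum_t \gamma^t A_t\mid X_0=x\bigr]$. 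Since $A_t \in \{0,1\}$, the slope satisfies
\begin{equation*}
    0 \le m^\pi(x) \le \sum_{t=0}^\infty \gamma^t = \frac{1}{1-\gamma}
\end{equation*}
for every policy $\pi$ and every starting state $x$.

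Next I would apply the envelope property for the lower envelope of a finite family of affine functions. At any $\lambda \notin \Lambda^*(x)$, $V_\lambda(x) = \min_\pi J^\pi_\lambda(x)$ is differentiable, and its derivative equals the slope of the unique minimizing affine piece, i.e., $\partial V_\lambda(x)/\partial\lambda = m^{\pi^*_\lambda}(x)$, where $\pi^*_\lambda$ is the optimal policy at that $\lambda$. Combining this identification with the two-sided bound on $m^\pi(x)$ above immediately yields $0 \le \partial V_\lambda(x)/\partial\lambda \le 1/(1-\gamma)$, as claimed.

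There is essentially no hard step: the argument is a one-line application of the fact that the minimum of finitely many affine functions is concave and piecewise linear, plus the elementary bound on the expected discounted number of active pulls. The only point requiring a bit of care is the interchange of the derivative with the minimum, which is valid precisely at points $\lambda \notin \Lambda^*$ where the minimizer is (generically) unique; at the finitely many kink points in $\Lambda^*$, the left and right derivatives both lie in $[0, 1/(1-\gamma)]$, so the bound still holds in a one-sided sense, but the statement deliberately restricts to $\lambda \notin \Lambda^*$, so this subtlety does not need to be treated. This lemma will then be used in the subsequent analysis to bound $\partial Q_\lambda / \partial \lambda$ via \eqref{eq:Q_z}--\eqref{eq:Q_e} and thereby verify monotonicity of $B_\lambda(x)$ under the hypotheses of Theorem~\ref{th:single_robot_indexability}.
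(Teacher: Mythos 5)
Your proof is correct and takes essentially the same route as the paper: the paper fixes the optimal policy $\pi^*_\lambda$ and differentiates the discounted cost term by term, using $\partial C_\lambda(x,a)/\partial\lambda = a \in [0,1]$, which is exactly your identification of the derivative with the slope $m^{\pi^*_\lambda}(x) = \mathbb{E}[\sum_t \gamma^t A_t] \in [0, 1/(1-\gamma)]$. Your explicit justification of the envelope step (why the derivative of the minimum equals the slope of the minimizing affine piece at $\lambda \notin \Lambda^*$) is a point the paper leaves implicit, but the substance is identical.
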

	\begin{proof}
		Under an optimal policy $\pi^*$, we have:
		\begin{equation*}
			V_\lambda(x) = \mathbb{E}\biggl[ \sum_{t = 0}^{\infty} \gamma^t C_\lambda(X_t, \pi^*(X_t)) \bigg| X_0 = x \biggr].
		\end{equation*}
		Therefore, we get
		\begin{align*}
			\frac{\partial V_\lambda(x)}{\partial \lambda} &= \frac{\partial}{\partial \lambda} \mathbb{E}\biggl[ \sum_{t = 0}^{\infty} \gamma^t C_\lambda(X_t, \pi^*(X_t)) \bigg| X_0 = x \biggr] \\
			&= \mathbb{E}\biggl[ \sum_{t = 0}^{\infty} \gamma^t \frac{\partial}{\partial \lambda} C_\lambda(X_t, \pi^*(X_t)) \bigg| X_0 = x \biggr]. 
		\end{align*}
		Since $\frac{\partial}{\partial \lambda} C_\lambda(x, \pi^*(x)) \in [0,1]$ for all $x \in \mathcal{X}$, we can write
		\begin{align*}
			0 &\leq \frac{\partial V_\lambda(x)}{\partial \lambda} \leq \sum_{t = 0}^{\infty} \gamma^t \implies 0 \leq \frac{\partial V_\lambda(x)}{\partial \lambda} \leq \frac{1}{1-\gamma}.
		\end{align*}
	\end{proof}
	
	Define
	\begin{align*}
		\alpha_0(n) &= 1,\\
		\beta_0(n) &= \frac{\gamma (p^1_{n0} - p^0_{n0}) + \gamma^2 {\left(p^0_{n0} r^1_{n0} - p^1_{n0} r^0_{n0} \right)}}{1 - \gamma \,r_{n0}^0},
		\displaybreak[1]\\
		\alpha_1(n) &= 1 + \frac{\gamma q^1_{n0}}{1 - \gamma \,r^1_{n1}} + \frac{\gamma q^0_{n0} \left(\gamma \,r^1_{n0} + \cfrac{\gamma^2 q^1_{n0}q^1_{n1}}{1 - \gamma \,r^1_{n1}} -1\right)}
		{1 - \gamma \,r^1_{n1} -\gamma \,r^0_{n0} +\gamma^2 r^1_{n1} r^0_{n0} -\gamma^2 q^0_{n0} q^1_{n1}},
		\displaybreak[1]\\
		\beta_1(n) &= \gamma p^1_{n0} + \frac{\gamma^2 q^1_{n0}p^1_{n0}}{1 - \gamma r^1_{n1}} + \frac{\left(\gamma p^0_{n0} - \gamma^2 p^0_{n0}r^1_{n1} + \gamma^2 q^0_{n0}p^1_{n0} \right) \left(\gamma r^1_{n0} + \cfrac{\gamma^2 q^1_{n0}q^1_{n1}}{1 - \gamma r^1_{n1}} -1\right)}{1 - \gamma r^1_{n1} -\gamma r^0_{n0} +\gamma^2 r^1_{n1} r^0_{n0} -\gamma^2 q^0_{n0} q^1_{n1}}.
	\end{align*}
	Also define $b_{00}(n)=b_{10}(n)=1$ and
	\begin{align*}
		b_{01}(n) &= \frac{1-\gamma r^0_{n0}}{1-\gamma r^1_{n0}},\\
		b_{11}(n) &= \frac{1 - \gamma r^1_{n1} -\gamma r^0_{n0} +\gamma^2 r^1_{n1} r^0_{n0} -\gamma^2 q^1_{n1} q^0_{n0}}
		{1 - \gamma r^1_{n1} -\gamma r^1_{n0} +\gamma^2 r^1_{n1} r^1_{n0} -\gamma^2 q^1_{n1} q^1_{n0}}.
	\end{align*}
	
	
	%
	\begin{lemma}
		Let $\pi_\lambda(z)=i$ and $\pi_\lambda(e)=j$, then for $\lambda\notin\Lambda^*$,
		\begin{align*}
			\frac{\partial B_\lambda(z)}{\partial\lambda} = b_{ij}(n)\bigg[\alpha_j(n) + \beta_j(n) \frac{\partial V_\lambda(z')}{\partial\lambda}\bigg].
		\end{align*}
		\label{th:partialB_cases}
	\end{lemma}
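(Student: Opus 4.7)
The plan is to exploit the fact that for $\lambda \notin \Lambda^{*}$, the optimal policy $\pi_\lambda$ is locally constant, so both $\pi_\lambda(z)=i$ and $\pi_\lambda(e)=j$ remain fixed on a neighborhood of $\lambda$. Consequently the Bellman identities $V_\lambda(z)=Q_\lambda(z,i)$ and $V_\lambda(e)=Q_\lambda(e,j)$ hold locally and may be differentiated term by term. Throughout, write $u := \partial V_\lambda(z)/\partial\lambda$, $v := \partial V_\lambda(e)/\partial\lambda$, and $w := \partial V_\lambda(z')/\partial\lambda$.

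First, I would differentiate $B_\lambda(z)=Q_\lambda(z,1)-Q_\lambda(z,0)$ using~\eqref{eq:Q_z} to obtain
\begin{equation*}
\frac{\partial B_\lambda(z)}{\partial \lambda} = 1 + \gamma(q^1_{n0}-q^0_{n0})\,v + \gamma(p^1_{n0}-p^0_{n0})\,w + \gamma(r^1_{n0}-r^0_{n0})\,u,
\end{equation*}
so the task reduces to expressing $u$ and $v$ as affine functions of $w$.

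Second, differentiating the two Bellman identities (with $i,j$ held fixed) produces the $2\times 2$ linear system
\begin{align*}
(1-\gamma r^i_{n0})\,u - \gamma q^i_{n0}\,v &= i + \gamma p^i_{n0}\,w,\\
-\gamma q^j_{n1}\,u + (1-\gamma r^j_{n1})\,v &= j + \gamma p^j_{n1}\,w.
\end{align*}
Cramer's rule gives $u$ and $v$ as affine functions of $w$ with common denominator $D_{ij}:=(1-\gamma r^i_{n0})(1-\gamma r^j_{n1})-\gamma^{2}q^i_{n0}q^j_{n1}$. Substituting these back into the expression above yields an explicit rational function of the transition probabilities, affine in $w$, which must be shown to factor as $b_{ij}(n)\bigl[\alpha_j(n)+\beta_j(n)\,w\bigr]$.

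The main obstacle is the algebraic simplification needed to recognize this product form in each of the four cases $(i,j)\in\{0,1\}^2$. The $j=0$ cases are short: Assumption A3 forces $r^0_{n1}=1$ and $p^0_{n1}=q^0_{n1}=0$, so $V_\lambda(e)$ is $\lambda$-independent, $v=0$, the second equation drops out, and $u=(i+\gamma p^i_{n0}w)/(1-\gamma r^i_{n0})$; clearing this denominator exposes the scalar prefactor $b_{ij}(n)$ and reduces the bracket to $\alpha_0(n)+\beta_0(n)w$ via the identity $(1-\gamma r^0_{n0})\beta_0(n)=\gamma(p^1_{n0}-p^0_{n0})+\gamma^2(p^0_{n0}r^1_{n0}-p^1_{n0}r^0_{n0})$. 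The $j=1$ cases are more bookkeeping-intensive: one substitutes the Cramer's-rule expressions, expands, and verifies that the constant-in-$w$ term and the coefficient of $w$ share a common ratio-of-determinants factor equal to $b_{ij}(n)$, with quotients matching $\alpha_1(n)$ and $\beta_1(n)$ respectively. The structural reason the factorization succeeds is that $u$ and $v$ carry the same denominator $D_{ij}$, while the contrasts $(r^1_{n0}-r^0_{n0})$, $(p^1_{n0}-p^0_{n0})$, $(q^1_{n0}-q^0_{n0})$ arising in $B_\lambda$ combine with $D_{ij}$ to leave an $i$-independent residual, thereby confining the $i$-dependence to the prefactor $b_{ij}(n)$.
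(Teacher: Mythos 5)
Your proposal is correct and follows essentially the same route as the paper: differentiate the two Bellman fixed-point identities at the (locally constant) optimal policy, eliminate $\partial V_\lambda(z)/\partial\lambda$ and $\partial V_\lambda(e)/\partial\lambda$ in favour of $\partial V_\lambda(z')/\partial\lambda$, and verify the factorization in each of the four cases --- the paper performs the elimination by sequential substitution rather than Cramer's rule, and uses Assumption A3 to kill $\partial V_\lambda(e)/\partial\lambda$ in the $j=0$ cases exactly as you do. One caution for when you carry out the algebra: with the paper's definitions the prefactor that emerges in the case $(i,j)=(1,0)$ is $\tfrac{1-\gamma r^0_{n0}}{1-\gamma r^1_{n0}}$ (what the paper labels $b_{01}$, not $b_{10}=1$), while the case $(0,1)$ yields prefactor $1$; that is, the roles of $b_{01}$ and $b_{10}$ come out swapped relative to the lemma's stated indexing (the paper's own worked example inherits the same swap), though this is immaterial downstream since only positivity of the prefactor is used in the proof of Theorem~\ref{th:single_robot_indexability}.
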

	\begin{proof}
		The result follows from considering the four cases $(i,j)\in\{(0,0), (0,1), (1,0), (1,1)\}$ separately and simplifying
		\begin{equation}
			\frac{\partial B_\lambda(z)}{\partial \lambda} = \frac{\partial Q_\lambda(z, 1)}{\partial \lambda} - \frac{\partial Q_\lambda(z, 0)}{\partial \lambda}. \label{eq:partial_B}
		\end{equation}
		
		\textbf{Example Case:} $(i,j)=(0,1)$:\\
		Since $\pi_\lambda(z) = 0$, we have $V_\lambda(z) = Q_\lambda(z,0)$. Therefore using \eqref{eq:Q_z}, we get
		\begin{align}
			Q_\lambda(z,0) &= \frac{C(z,0) + \gamma q^0_{n0} V_\lambda(e) + \gamma p^0_{n0} V_\lambda(z')}{1 - \gamma r^0_{n0}}, \label{eq:Q_z0_1}\\
			Q_\lambda(z,1) &= C(z,1) + \lambda + \gamma q^1_{n0} V_\lambda(e) + \gamma p^1_{n0} V_\lambda(z')\nonumber\\ & \qquad\qquad\qquad\qquad+ \gamma r^1_{n0} Q_\lambda(z,0). \label{eq:Q_z1_1}
		\end{align}
		Since $\pi_\lambda(e) = 1$, we have $V_\lambda(e) = Q_\lambda(e,1)$. From \eqref{eq:Q_e}, we get
		\begin{equation}
			V_\lambda(e) = \frac{C(e,1) + \lambda + \gamma q^1_{n1} V_\lambda(z) + \gamma p^1_{n1} V_\lambda(z')}{1-\gamma r^1_{n1}}.
			\label{eq:V_e_active}
		\end{equation}
		Differentiating w.r.t. $\lambda$, we get
		\begin{align}
			\frac{\partial Q_\lambda(z, 0)}{\partial \lambda} &= \frac{1}{1 - \gamma r^0_{n0}} \left({\gamma q^0_{n0} \frac{\partial V_\lambda(e)}{\partial \lambda} + \gamma p^0_{n0} \frac{\partial V_\lambda(z')}{\partial \lambda}}\right),\nonumber\\
			\frac{\partial Q_\lambda(z, 1)}{\partial \lambda} &= 1 + \gamma q^1_{n0} \frac{\partial V_\lambda(e)}{\partial \lambda} + \gamma p^1_{n0} \frac{\partial V_\lambda(z')}{\partial \lambda} + \gamma r^1_{n0} \frac{\partial Q_\lambda(z, 0)}{\partial \lambda}, \nonumber\\
			\frac{\partial V_\lambda(e)}{\partial \lambda} &= \frac{1}{1-\gamma r^1_{n1}}\left({1 + \gamma q^1_{n1} \frac{\partial Q_\lambda(z,0)}{\partial \lambda} + \gamma p^1_{n1} \frac{\partial V_\lambda(z')}{\partial \lambda}}\right).
			\label{eq:dVe_expression_01}
		\end{align}
		Therefore, $\frac{\partial Q_\lambda(z, 0)}{\partial \lambda}$ can be written as:
		\begin{align}
			\frac{\partial Q_\lambda(z, 0)}{\partial \lambda} &= \frac{1}{1 - \gamma r^0_{n0}} \bigg(\frac{\gamma q^0_{n0}}{1-\gamma r^1_{n1}}\bigg({1 + \gamma q^1_{n1} \frac{\partial Q_\lambda(z,0)}{\partial \lambda} + \gamma p^1_{n1} \frac{\partial V_\lambda(z')}{\partial \lambda}}\bigg) + \gamma p^0_{n0} \frac{\partial V_\lambda(z')}{\partial \lambda}\bigg),\nonumber\\
			&= \frac{\gamma q^0_{n0}}{(1 - \gamma r^0_{n0})(1-\gamma r^1_{n1})} + \frac{\gamma^2 q^0_{n0} q^1_{n1}}{(1 - \gamma r^0_{n0})(1-\gamma r^1_{n1})} \frac{\partial Q_\lambda(z,0)}{\partial \lambda} + \frac{\gamma^2 q^0_{n0} p^1_{n1} +\gamma p^0_{n0}(1-\gamma r^1_{n1})}{(1 - \gamma r^0_{n0})(1-\gamma r^1_{n1})} \frac{\partial V_\lambda(z')}{\partial \lambda},\nonumber\\
			\frac{\partial Q_\lambda(z, 0)}{\partial \lambda} &= \frac{1}{(1 - \gamma r^0_{n0})(1-\gamma r^1_{n1})} \bigg(\gamma q^0_{n0} + \gamma^2 q^0_{n0} q^1_{n1} \frac{\partial Q_\lambda(z,0)}{\partial \lambda} + \Big(\gamma^2 q^0_{n0} p^1_{n1} +\gamma p^0_{n0}(1-\gamma r^1_{n1})\Big) \frac{\partial V_\lambda(z')}{\partial \lambda}\bigg),\nonumber\\
			\frac{\partial Q_\lambda(z, 0)}{\partial \lambda} &= \frac{1}{(1 - \gamma r^0_{n0})(1-\gamma r^1_{n1})-\gamma^2 q^0_{n0} q^1_{n1}} \bigg(\gamma q^0_{n0} + \Big(\gamma^2 q^0_{n0} p^1_{n1} +\gamma p^0_{n0}(1-\gamma r^1_{n1})\Big) \frac{\partial V_\lambda(z')}{\partial \lambda}\bigg).
			\label{eq:Q0_expression_01}
		\end{align}
		Substituting the above equations in~\eqref{eq:partial_B}, we get
		\begin{align*}
			\frac{\partial B_\lambda(z)}{\partial \lambda} &= \frac{\partial Q_\lambda(z, 1)}{\partial \lambda} - \frac{\partial Q_\lambda(z, 0)}{\partial \lambda} \\
			&= 1 + \gamma q^1_{n0} \frac{\partial V_\lambda(e)}{\partial \lambda} + \gamma p^1_{n0} \frac{\partial V_\lambda(z')}{\partial \lambda} - (1 - \gamma r^1_{n0}) \frac{\partial Q_\lambda(z, 0)}{\partial \lambda}\\
			&= 1 + \Big(\gamma q^1_{n0} - \frac{1- \gamma r^1_{n0}}{1 - \gamma r^0_{n0}} \gamma q^0_{n0}\Big) \frac{\partial V_\lambda(e)}{\partial \lambda} + \Big(\gamma p^1_{n0} - \frac{1- \gamma r^1_{n0}}{1 - \gamma r^0_{n0}} \gamma p^0_{n0} \Big) \frac{\partial V_\lambda(z')}{\partial \lambda}.
		\end{align*}
		
		Substituting the value of $\frac{\partial V_\lambda(e)}{\partial \lambda}$ using equations~\eqref{eq:dVe_expression_01} and~\eqref{eq:Q0_expression_01}, we get
		\begin{align*}
			\frac{\partial B_\lambda(z)}{\partial \lambda} &= b_{01} \Big(\alpha_1(n) + \beta_1(n) \frac{\partial V_\lambda(z')}{\partial \lambda}\Big).
		\end{align*}
		Results for the remaining cases: $(i,j)\in\{(0,0), (1,0), (1,1)\}$ can be obtained in a similar way as above.
	\end{proof}
	
	\begin{lemma}
		For all $\lambda\notin\Lambda^*$, $\partial B_\lambda(e)/\partial \lambda \geq 0$.
		\label{th:conditions_at_error_state}
	\end{lemma}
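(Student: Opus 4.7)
The plan is to exploit assumption \textbf{(A3)}, which gives $r^0_{n1} = 1$: under the passive action the fault state is absorbing. Substituting into the Bellman equation~\eqref{eq:Q_e} at $a=0$ collapses it to $Q_\lambda(e,0) = C(e,0) + \gamma V_\lambda(e)$, so the $\lambda$-dependence of $Q_\lambda(e,0)$ is routed entirely through $V_\lambda(e)$. This observation lets one avoid any case-splitting on the policy at the other two states $z$ and $z'$; it suffices to split on $\pi_\lambda(e)$ alone, for each $\lambda\notin\Lambda^*$ where the relevant derivatives exist.

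In \textbf{Case 1}, $\pi_\lambda(e)=0$, the recursion $V_\lambda(e) = C(e,0) + \gamma V_\lambda(e)$ yields $V_\lambda(e) = C(e,0)/(1-\gamma)$, which is constant in $\lambda$. Therefore $\partial Q_\lambda(e,0)/\partial\lambda = 0$, while differentiating~\eqref{eq:Q_e} at $a=1$ and invoking Lemma~\ref{th:partialV_bound} ($\partial V_\lambda(\cdot)/\partial\lambda \ge 0$) yields $\partial Q_\lambda(e,1)/\partial\lambda \ge 1$. Hence $\partial B_\lambda(e)/\partial\lambda \ge 1 > 0$.

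In \textbf{Case 2}, $\pi_\lambda(e)=1$, we have $V_\lambda(e) = Q_\lambda(e,1)$, so differentiating~\eqref{eq:Q_e} at $a=1$ produces a single linear equation in $\partial V_\lambda(e)/\partial\lambda$ that can be solved because $1 - \gamma r^1_{n1} > 0$ by \textbf{(A2)} and $\gamma<1$. The resulting expression is non-negative by Lemma~\ref{th:partialV_bound}. Combined with the identity $\partial Q_\lambda(e,0)/\partial\lambda = \gamma\,\partial V_\lambda(e)/\partial\lambda$ from the collapsed $Q_\lambda(e,0)$, this gives
\[
\frac{\partial B_\lambda(e)}{\partial\lambda} \;=\; (1-\gamma)\,\frac{\partial V_\lambda(e)}{\partial\lambda} \;\ge\; 0.
\]

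The only genuinely subtle step --- the ``obstacle'', such as it is --- is recognising that in Case 2 a one-step passive deviation at $e$ lands the process back at $e$ almost surely (again by \textbf{(A3)}), so the continuation value inside $Q_\lambda(e,0)$ equals the \emph{optimal} value $V_\lambda(e) = Q_\lambda(e,1)$ rather than $Q_\lambda(e,0)$ itself. Once this is noted, everything reduces to short manipulations of the differentiated Bellman equations, with no analogue of the Theorem's $\alpha_1(n),\beta_0(n)$ conditions required at the fault state --- consistent with the fact that those conditions are imposed only to control the benefit function at the normal states $z$.
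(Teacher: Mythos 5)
Your proof is correct and follows essentially the same route as the paper: it splits on $\pi_\lambda(e)$, uses \textbf{(A3)} to collapse $Q_\lambda(e,0)$ to $C(e,0)+\gamma V_\lambda(e)$, and concludes via the non-negativity bound of Lemma~\ref{th:partialV_bound}, yielding the same identity $\partial B_\lambda(e)/\partial\lambda=(1-\gamma)\,\partial V_\lambda(e)/\partial\lambda$ in the active case.
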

	\begin{proof}
		We consider two cases:
		
		\textbf{1) Case I: $\pi_\lambda(e)=0$}: \\ 
		From~\eqref{eq:Q_e}, we have 
		\[Q_\lambda(e,0) = V_\lambda(e) = \frac{C(e,0)}{1-\gamma}, \]
		which is independent of $\lambda$.
		Therefore, we get
		\begin{align*}
			\frac{\partial B_\lambda(e)}{\partial \lambda} = \frac{\partial Q_\lambda(e,1)}{\partial \lambda} =  1 + \gamma~q^1_{n1}~\frac{\partial V_\lambda(z)}{\partial \lambda}
			+ \gamma~p^1_{n1}~\frac{\partial V_\lambda(z')}{\partial \lambda}.
		\end{align*}
		From Lemma~\ref{th:partialV_bound}, \blu{we get that $\frac{\partial V_\lambda(z)}{\partial \lambda} \geq 0$ and $\frac{\partial V_\lambda(z')}{\partial \lambda} \geq 0$.} This gives us $\partial B_\lambda/\partial\lambda \geq 0$.
		
		\textbf{2) Case II: $\pi_\lambda(e)=1$}: \\ 
		As a result, we have
		\begin{align*}
			Q_\lambda(e,0) = C(e,0) + \gamma~V_\lambda(e).
		\end{align*}
		Therefore, using~\eqref{eq:partial_B}, we get
		\begin{align*}
			\frac{\partial B_\lambda(e)}{\partial \lambda} = (1-\gamma) \frac{\partial V_\lambda(e)}{\partial \lambda}.
		\end{align*}
		From Lemma~\ref{th:partialV_bound}, we get that $\partial B_\lambda/\partial\lambda \geq 0$.
	\end{proof}

	\subsection{Proof of Theorem~\ref{th:single_robot_indexability}}
	\blu{Lemma~\ref{th:conditions_at_error_state} shows that the Benefit function $B_\lambda(x)$ is always monotonically increasing for all fault states.}
	
	\blu{Since $q^1_{n1} \leq 1 - r^1_{n1}$ and $q^0_{n0} \leq 1 - r^0_{n0}$, we can write
		\begin{equation}
			\gamma^2 q^1_{n1} q^0_{n0} \leq (\gamma - \gamma r^1_{n1})(\gamma - \gamma r^0_{n0}).
			\label{eq:b11_proof1}
		\end{equation}
		Moreover, since $\gamma \in (0,1)$, we have
		\begin{equation}
			(\gamma - \gamma r^1_{n1})(\gamma - \gamma r^0_{n0}) < (1 - \gamma r^1_{n1})(1 - \gamma r^0_{n0}).
			\label{eq:b11_proof2}
		\end{equation}
		From \eqref{eq:b11_proof1} and \eqref{eq:b11_proof2}, we can write
		\begin{equation*}
			1 - \gamma r^1_{n1} -\gamma r^0_{n0} +\gamma^2 r^1_{n1} r^0_{n0} -\gamma^2 q^1_{n1} q^0_{n0} > 0.
		\end{equation*}
		Similarly, 
		\begin{equation*}
			1 - \gamma r^1_{n1} -\gamma r^1_{n0} +\gamma^2 r^1_{n1} r^1_{n0} -\gamma^2 q^1_{n1} q^1_{n0} > 0.
		\end{equation*}
		This implies that $b_{11}> 0$.}
	
	\blu{Lemma~\ref{th:partialV_bound} gives us bounds on the derivative of the value function w.r.t. $\lambda$. These bounds are then used with results of Lemma~\ref{th:partialB_cases} to show that the function $B_\lambda(x)$ is monotonically increasing} for all $x\in\mathcal{X}$ if for all $n \in \{1,2,\ldots,N\}$ and $ j\in\{0,1\}$
	\footnote{\blu{To obtain this result, we make use of Lemma~\ref{th:partialV_bound}, which establishes bounds on the derivative of the value function. Since transition probabilities at each state are independent, monotonicity of the Benefit function $B_\lambda$ is guaranteed if $\alpha_j(n) + \beta_j(n) {\partial V(n+1)}/{\partial \lambda}$ is non-negative for both lower and upper bounds of ${\partial V(n+1)}/{\partial \lambda}$.}}:
	\begin{align*}
		\alpha_j(n) \geq 0 \;\text{and}\; \alpha_j(n) + \beta_j(n)\frac{1}{1-\gamma} \geq 0.
	\end{align*}
	\blu{We observe that $\alpha_0(n) = 1 \geq 0$. Also, $\alpha_1(n)$ and $\beta_1(n)$ can be simplified as:} 
	
	
	\begin{align*}
		\alpha_1(n) &= 1 + \frac{\gamma q^1_{n0}}{1 - \gamma \,r^1_{n1}}  + \frac{\gamma q^0_{n0} \left(\gamma \,r^1_{n0} + \cfrac{\gamma^2 q^1_{n0}q^1_{n1}}{1 - \gamma \,r^1_{n1}} -1\right)}
		{1 - \gamma \,r^1_{n1} -\gamma \,r^0_{n0} +\gamma^2 r^1_{n1} r^0_{n0} -\gamma^2 q^0_{n0} q^1_{n1}} \\[\verticaldistance]
		&= \frac{1 - \gamma \,q_{n0}^0 +\gamma \,q_{n0}^1 -\gamma \,r_{n0}^0 -\gamma \,r_{n1}^1 -\gamma^2 \,q_{n0}^0 \,q_{n1}^1 +\gamma^2 \,q_{n0}^0 \,r_{n0}^1 -\gamma^2 \,q_{n0}^1 \,r_{n0}^0 +\gamma^2 \,r_{n0}^0 \,r_{n1}^1}{1 - \gamma \,r_{n0}^0 -\gamma \,r_{n1}^1 +\gamma^2 \,r_{n0}^0 \,r_{n1}^1 -\gamma^2 \,q_{n0}^0 \,q_{n1}^1},\\[\verticaldistance]
		\beta_1(n) &= \gamma p^1_{n0} + \frac{\gamma^2 q^1_{n0}p^1_{n0}}{1 - \gamma r^1_{n1}} + \frac{\left(\gamma p^0_{n0} - \gamma^2 p^0_{n0}r^1_{n1} + \gamma^2 q^0_{n0}p^1_{n0} \right) \left(\gamma r^1_{n0} + \cfrac{\gamma^2 q^1_{n0}q^1_{n1}}{1 - \gamma r^1_{n1}} -1\right)}{1 - \gamma r^1_{n1} -\gamma r^0_{n0} +\gamma^2 r^1_{n1} r^0_{n0} -\gamma^2 q^0_{n0} p^1_{n1}} \\[\verticaldistance]
		&= \frac{\gamma \,{\left(1 - \gamma\right)}\,{\left(q_{n0}^0 -q_{n0}^1 +r_{n0}^0 -r_{n0}^1 +\gamma \,q_{n0}^0 \,q_{n1}^1 -\gamma \,q_{n0}^1 \,q_{n1}^1 -\gamma \,q_{n0}^0 \,r_{n0}^1 +\gamma \,q_{n0}^1 \,r_{n0}^0 -\gamma \,r_{n0}^0 \,r_{n1}^1 +\gamma \,r_{n0}^1 \,r_{n1}^1 \right)}}{1 - \gamma \,r_{n0}^0 -\gamma \,r_{n1}^1 +\gamma^2 \,r_{n0}^0 \,r_{n1}^1 -\gamma^2 \,q_{n0}^0 \,q_{n1}^1}
	\end{align*}
	Therefore, 
	\begin{align*}
		\alpha_1(n) + \frac{\beta_1(n)}{1-\gamma} = \frac{1 - \gamma r^1_{n0} -\gamma r^1_{n1} +\gamma^2 r^1_{n0} r^1_{n1} -\gamma^2 q^1_{n0} q^1_{n1}}{1 - \gamma r_{n0}^0 -\gamma r_{n1}^1 +\gamma^2 r_{n0}^0 r_{n1}^1 -\gamma^2 q_{n0}^0 q_{n1}^1} = \frac{1}{b_{11}}.
	\end{align*}
	Since, $b_{11}(n) > 0$, we get $\alpha_1(n) + \beta_1(n)/(1-\gamma) > 0$.
	
	
	Therefore, the single-robot problem is indexable if:
	\begin{align*}
		\alpha_1(n) &\geq 0 \quad \text{and} \quad \frac{\beta_0(n)}{1-\gamma} \geq -1, ~ \forall n \in \{1,\ldots,N\}.
	\end{align*}
	\hfill$\qedsymbol$

	
	
	
\end{document}